\documentclass{article}

 \usepackage[preprint]{neurips_2026}

\usepackage[utf8]{inputenc}
\usepackage[T1]{fontenc}
\usepackage{hyperref}
\usepackage{url}
\usepackage{booktabs}
\usepackage{amsfonts}
\usepackage{nicefrac}
\usepackage{microtype}
\usepackage{xcolor}
\usepackage{graphicx}
\usepackage{subcaption}
\usepackage{multirow}
\definecolor{goldenpoppy}{rgb}{0.99, 0.76, 0.0}
\definecolor{glaucous}{rgb}{0.38, 0.51, 0.71}
\definecolor{blue-violet}{rgb}{0.54, 0.17, 0.89}

\usepackage{amsmath}
\usepackage{amssymb}
\usepackage{mathtools}
\usepackage{amsthm}

\usepackage[capitalize,noabbrev]{cleveref}

\usepackage{tikz}

\usetikzlibrary{automata, positioning}

\theoremstyle{plain}
\newtheorem{theorem}{Theorem}[section]

\newtheorem{lemma}[theorem]{Lemma}

\theoremstyle{definition}
\newtheorem{definition}[theorem]{Definition}
\newtheorem{assumption}[theorem]{Assumption}
\theoremstyle{remark}

\title{
Revisiting Policy Gradients for Restricted Policy Classes: Escaping Myopic Local Optima with \\$k$-step Policy Gradients
}

\author{
  Alex DeWeese\quad Guannan Qu  \\
  Department of Electrical and Computer Engineering\\
  Carnegie Mellon University\\ Pittsburgh, PA 15213-3890, USA\\
  \texttt{\{mdeweese, gqu\}@andrew.cmu.edu}
}

\begin{document}

\maketitle

\begin{abstract}

  This work revisits standard policy gradient methods used on restricted policy classes, which are known to get stuck in suboptimal critical points. We identify an important cause for this phenomenon to be that the policy gradient is itself fundamentally myopic, i.e. it only improves the policy based on the one-step $Q$-function.
In this work, we propose a generalized $k$-step policy gradient method that couples the randomness within a $k$-step time window and can escape the myopic local optima in MDPs with restricted policy classes.  We show this new method is theoretically guaranteed to converge to a solution that is exponentially close in performance to the optimal deterministic policy with respect to $k$.
Further, we show projected gradient descent and mirror descent with this $k$-step policy gradient can achieve this exponential guarantee in $O(\frac{1}{T})$ iterations, despite only assuming smoothness and differentiability of the value function. This will provide near optimal solutions to previously elusive applications like state aggregation and partially observable cooperative multi-agent settings.
Moreover, our bounds avoid the ubiquitous distribution mismatch factors $||d_\mu^{\pi^*} / d_\mu^{\pi}||_\infty$ and $||d_\mu^{\pi^*} / \mu||_\infty$ enabling the $k$-step policy gradient method to escape suboptimal critical points that emerge from poor exploration in fully observable settings.
\end{abstract}
\section{Introduction}
\label{introduction}
Policy gradient methods are one of the most widely used methods to solve complex MDP problems
\cite{williams1992simple, mnih2016asynchronous, schulman2015trust, schulman2017proximal, lillicrap2015continuous}.
When it comes to the convergence of policy gradient methods, it has been shown that without restrictions on the  policy class, policy gradient methods will converge to the optimal policy when the initial state distribution is sufficiently exploratory \cite{bhandari2024global,agarwal2021theory, xiao2022convergence}. However, policy gradient methods on restricted policy classes in general can face poor local optima. These restricted policy classes are ubiquitous and arise in many important applications including state aggregation, cooperative multi-agent settings with independence requirements, decentralized multi-agents, group decentralized multi-agents etc. \cite{cassandra1998survey, varaiya2013max, zhao2014design,liu2017learning, qu2022scalable, deweese2024locally}.
While there have been works that show optimality guarantees of policy gradient methods for restricted policy classes, they usually work under strong assumptions, e.g. variants of Bellman completeness, closure of the restricted policy class, etc. \cite{agarwal2021theory,  bhandari2024global}. Typically, these assumptions eliminate poor suboptimal critical points and are stronger versions of ``approximate realizability'', which requires a near optimal policy (among all unrestricted policies) to be in the restricted class. However, these conditions do not hold in general for our applications of interest, which may have poor critical points and may not be approximately realizable.
\subsection{Contribution}
\textbf{Myopicness of the Policy Gradient: } In this work, we identify a source of suboptimal local optima that arise from the myopicness of the policy gradient theorem (\cref{sec:myopic}). Specifically, the policy gradient theorem finds an improvement direction based on the one-step $Q$-function, i.e. only the policy at the first step is being optimized, whereas the policy for all subsequent steps are fixed. While this can be adequate for finding the optimal policy for the unrestricted policy class (when the initial state distribution is exploratory),
we show that for restricted policy classes, no improvement direction may exist even if there are better policies in the class because of the ``short sightedness'' of the one-step $Q$-function (see \cref{running_example}).

\textbf{$k$-step Policy Gradients:} To counter this myopicness, we propose a $k$-step policy gradient method (\Cref{sec:k-step}) that can escape these poor local optima using a $k$-step $Q$-function. This method will descend on an adjacent $k$-step landscape where the critical points are guaranteed to be exponentially close to the optimal deterministic solution with respect to $k$ (see \cref{crit_guarantee}). 

\textbf{Convergence of Projected GD and Mirror Descent:} Further, we show descent (projected GD or mirror) will be guaranteed to achieve this exponentially close to optimal guarantee in $O(\frac{1}{T})$ iterations (see \cref{projected_gd_bound} and \cref{mirror_bound}). This is despite only assuming smoothness and without the usual gradient dominance condition which does not hold in general for restricted policy classes (compared to the typical $\|\nabla f(x)\|_2 \leq O(1/\sqrt{T})$ convergence guarantee for gradient descent on smooth functions).
Our results provide unusually strong theoretical convergence and near optimality guarantees rarely seen for these partial observable applications like state aggregation and cooperative multi-agent settings with independent agents, decentralized agents, or group decentralized agents (see \cref{related_works}).

\textbf{Extension to the Fully Observable Setting:}
We observe that our bounds avoid the distribution mismatch factors $||d_\mu^{\pi^*} / d_\mu^{\pi}||_\infty$ and $||d_\mu^{\pi^*} / \mu||_\infty$ so it achieves near optimal convergence even when the initial state distribution is poor. We show in \cref{all_sims} that poor exploration can cause suboptimal critical points to emerge even in fully observable settings and our $k$-step methods can escape.

\vspace{0ex}
\section{Preliminaries}
\subsection{Markov Decision Process}
The infinite horizon Markov Decision Process (MDP) includes a set of states $\mathcal S$ and actions $\mathcal A$. The dynamics begin with a sample from some fixed initial state distribution $\mu\in\Delta(\mathcal S)$. At each state, actions will be sampled from our policy $\pi: \mathcal S \rightarrow \Delta(\mathcal A)$ and will be transitioned according to a transition function $P: \mathcal S \times \mathcal A \rightarrow \Delta(\mathcal S)$ at each step.
The notation $\tau \sim \pi\lvert_\mu$ will represent the trajectory of states and actions $\tau=(s(0), a(0), s(1), a(1), \ldots)$ generated from a policy $\pi$ starting with initial state distribution $\mu$. Considering some restricted set of policies $\Pi_{res}$, we would like to find a policy $\pi \in \Pi_{res}$ that minimizes the value function $J^{\pi}(\mu) = \mathbb E_{\tau \sim \pi\lvert_\mu}[\sum_{t = 0}^\infty \gamma^t g(s(t), a(t))]$ according to some cost function $g:\mathcal S \times \mathcal A \rightarrow \mathbb R$ with $g \in [-g_{max}, g_{max}]$ and discount factor $\gamma \in (0, 1)$.
In general, solutions to the restricted policy class $\Pi_{res}$ setting are difficult to find because our standard techniques for solving MDPs may no longer apply. Performing the Bellman optimality equations may produce a solution outside of $\Pi_{res}$ and policy iteration may leave the set $\Pi_{res}$ when computing the greedy policy. Policy gradient methods appear to be a good candidate when $\Pi_{res}$ is a parameterized set of policies but the problem in general contains poor suboptimal critical points (see \cref{running_example}).
Later, we will propose a generalized $k$-step policy gradient and show  theoretical bounds that ensures various descent algorithms escape poor suboptimal local optima.

\vspace{0ex}
\section{Related Works}
\label{related_works}

To the best of our knowledge, prior works with theoretical near optimality results for policy gradients in MDPs either assume an unrestricted policy class or make strong assumptions about the policy class that heavily restrict its applications.
\textbf{Unrestricted Policy Class:} In the case of an unrestricted policy class, there have been a line of works analyzing the convergence of policy gradient methods \cite{agarwal2021theory, xiao2022convergence, yuan2022general,  fatkhullin2023stochastic, zhang2020variational, fazel2018global, liu2020improved, lan2023policy, wang2022policy, yuan2022linear, cen2022fast, mei2020global,mei2022role}. These works analyze projected gradient descent and mirror descent with its variants showing theoretical bounds for convergence. However, in our work we consider restricted policy classes which may have poor suboptimal local minima even for simple examples (see \cref{running_example}).
\textbf{Restricted Policy Class:} Compared to the unrestricted case, strong theoretical results for policy gradient methods in restricted policy classes have been minimal. This literature uses assumptions that rule out all suboptimal local minima ensuring global optimality or have bounds that are meaningful only when similar assumptions hold \cite{agarwal2021theory, bhandari2024global, mei2023ordering, wang2019neural}.
 These guarantees have ``strong results under strong assumptions'' in the sense that strong convergence rates and optimality guarantees are achieved, under strong assumptions like gradient dominance or policy-class closure. Unfortunately, these assumptions rule out many applications, including the ones considered in this work (\cref{applications}).
\textbf{Function Approximation in RL:} There is a long line of work in reinforcement learning theory that studies imposing a function class on either the $Q$-function or the policy class and seeks to find the optimal $Q$-function or policy with sample complexity polynomial in the dimension of the class as opposed to the state space size \cite{wang2019optimism, munos2005error, yang2019sample, du2021bilinear, jin2023provably, modi2020sample, agarwal2020flambe, du2019provably, jiang2017contextual, sun2019model, wen2013efficient}. However, most of these works require certain assumptions on the function class (e.g. linear MDP, linear Bellman completeness, bilinear class) all of which require the ``realizability'' or ``approximate realizability'' assumption, meaning a (near) optimal policy or $Q$-function is in the class. As mentioned in \cref{introduction}, many applications do not satisfy this, and our work does not require approximate realizability.
\textbf{Action Chunking:} Our solution has some resemblances to the recent proposal of action chunking for reinforcement learning \cite{li2025reinforcement,li2025decoupled}. However, our sampling scheme differs from action chunking in that we will be executing a sampled deterministic policy for $k$-steps rather than $k$ actions determined all at once.
\textbf{POMDP / Dec-POMDP:} Many of the applications considered in this work (such as state aggregation and decentralized multi-agents) are partially observable. Previously, the primary theoretical model for these applications have been the POMDP \cite{papadimitriou1987complexity,cassandra1998survey} and Dec-POMDP \cite{oliehoek2016concise, bernstein2002complexity} which are generally seen as intractable to solve, proven to be PSPACE-complete and NEXP-complete respectively. Our work will provide an encouraging positive result for this literature by proving strong theoretical near optimality guarantees for gradient descent methods on a restricted policy class that models the partial observability.

\vspace{0ex}
\section{Myopic Traits of Policy Gradients}\label{sec:myopic}

\subsection{Revisiting the Policy Gradient Theorem}

The $k$-step policy gradient we introduce in this work will counteract the myopic behavior of traditional policy gradient methods allowing it to better escape poor local optima.
To illustrate the issue, assume a restricted parameterized policy class $\Pi_{res}$ with policies of the form $\pi_\alpha:\mathcal S \rightarrow \Delta(A)$ for some learning parameters $\alpha \in I$. Consider the standard policy gradient theorem:
\begin{theorem}[Policy Gradient Theorem]
Let $\Pi_{res}$ be a restricted policy class parametrized by $\alpha \in I$ where $J^{\pi_\alpha}(\mu)$ is differentiable with respect to $\alpha$. Then,
\label{pg}
\vspace{0ex}

\hspace{15ex}$  \nabla_\alpha J^{\pi_\alpha}(\mu) =     \frac{1}{1 - \gamma}\mathbb E_{s \sim d_\mu^{\pi_\alpha}}[\mathbb E_{a \sim \pi_\alpha(\cdot \rvert s)}[Q^{\pi_\alpha}(s,a) \nabla_\alpha\log \pi_\alpha(a\lvert s)]]\nonumber$

where $d_{\mu}^{\pi_\alpha}(s) = (1 - \gamma)\sum_{t = 0}^\infty \gamma^t P(s_t = s\lvert \mu, \pi_\alpha)$ is the discounted state occupancy measure.
\end{theorem}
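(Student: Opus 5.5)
We prove \cref{pg} by the standard recursive argument on the Bellman equation, treating the $Q$-function and value function as functions of $\alpha$ through the policy only; $P$, $g$ and $\mu$ do not depend on $\alpha$. Write $V^{\pi_\alpha}(s) = J^{\pi_\alpha}(\delta_s)$ and $Q^{\pi_\alpha}(s,a) = g(s,a) + \gamma \sum_{s'} P(s'\mid s,a) V^{\pi_\alpha}(s')$. The state space is assumed finite, or at least regular enough to interchange sums and derivatives. Then $V^{\pi_\alpha}(s) = \sum_a \pi_\alpha(a\mid s) Q^{\pi_\alpha}(s,a)$.

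Differentiate this identity with the product rule:
\[
\nabla_\alpha V^{\pi_\alpha}(s) = \sum_a \nabla_\alpha \pi_\alpha(a\mid s)\, Q^{\pi_\alpha}(s,a) + \gamma \sum_a \pi_\alpha(a\mid s) \sum_{s'} P(s'\mid s,a)\, \nabla_\alpha V^{\pi_\alpha}(s').
\]
The cost term drops out because $g$ is independent of $\alpha$. Writing $\phi(s) = \sum_a \nabla_\alpha \pi_\alpha(a\mid s) Q^{\pi_\alpha}(s,a)$ and letting $P_\alpha$ denote the state transition kernel under $\pi_\alpha$, this reads $\nabla V = \phi + \gamma P_\alpha \nabla V$. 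Unrolling $n$ times gives
\[
\nabla V(s) = \sum_{t=0}^{n-1}\gamma^t \sum_{s'} P(s_t = s'\mid s_0=s,\pi_\alpha)\,\phi(s') + \gamma^n (P_\alpha^n \nabla V)(s).
\]

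The main obstacle is justifying that the remainder vanishes, i.e.\ that $\nabla_\alpha V^{\pi_\alpha}$ is bounded. Here we use the hypothesis that $J^{\pi_\alpha}$ is differentiable. In the finite case, the fixed-point equation can also be solved directly: $\nabla V = (I-\gamma P_\alpha)^{-1}\phi$, since $I - \gamma P_\alpha$ is invertible for a stochastic matrix $P_\alpha$ with $\gamma<1$. This yields the full series $\sum_{t\ge0}\gamma^t P_\alpha^t \phi$ without any limiting argument. Differentiability of each $V^{\pi_\alpha}(s)$ is needed here; we take it to be implicit in the hypothesis, or else derive it from differentiability of $\pi_\alpha$ via the closed form $V = (I-\gamma P_\alpha)^{-1} g_\alpha$.

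Finally, average over $s_0\sim\mu$. By definition, $\sum_t \gamma^t P(s_t=s'\mid\mu,\pi_\alpha) = \frac{1}{1-\gamma} d_\mu^{\pi_\alpha}(s')$, so
\[
\nabla_\alpha J^{\pi_\alpha}(\mu) = \frac{1}{1-\gamma}\mathbb E_{s\sim d_\mu^{\pi_\alpha}}\Big[\sum_a \nabla_\alpha\pi_\alpha(a\mid s) Q^{\pi_\alpha}(s,a)\Big].
\]
Applying the log-derivative identity $\nabla_\alpha \pi_\alpha = \pi_\alpha \nabla_\alpha \log \pi_\alpha$ on actions with $\pi_\alpha(a\mid s)>0$ gives the stated expectation over $a\sim\pi_\alpha(\cdot\mid s)$.

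The identity needs care on actions where $\pi_\alpha(a\mid s)=0$. At an interior point such terms contribute nothing by the usual convention. At a boundary point, $\pi_\alpha(a\mid s)$ attains its minimum $0$ at $\alpha$, so if it is differentiable there its gradient vanishes. In either case the formula holds with the convention $0\cdot\nabla\log\pi = 0$.
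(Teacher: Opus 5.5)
Your main line is the same argument the paper uses. The paper does not prove \cref{pg} separately, but its proof of the $k$-step version (\cref{kpg_representation}) is exactly this product-rule and recursive-substitution argument, and it reduces to yours at $k=1$. Your resolvent step $\nabla V=(I-\gamma P_\alpha)^{-1}\phi$, together with getting differentiability of each $V^{\pi_\alpha}(s)$ from that of $\pi_\alpha$, makes rigorous the ``$\ldots$'' that the paper leaves informal. Everything up to $\nabla_\alpha J^{\pi_\alpha}(\mu)=\frac{1}{1-\gamma}\mathbb E_{s\sim d_\mu^{\pi_\alpha}}\big[\sum_a\nabla_\alpha\pi_\alpha(a\mid s)\,Q^{\pi_\alpha}(s,a)\big]$ is correct.

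The error is in your last paragraph. The implication ``$\pi_\alpha(a\mid s)$ attains its minimum $0$ at $\alpha$, hence its gradient vanishes'' is valid only when $\alpha$ lies in the interior of $I$. You attached it to the boundary case, which is exactly where it fails, so the two cases are swapped. At a boundary point of $I$, a nonnegative differentiable function can vanish while having a nonzero one-sided derivative.

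The paper's running example is an instance. There $\pi_\theta(R\mid s)=\theta$ on $I=[0,1]$, so at $\theta=0$ you have $\pi_0(R\mid s)=0$ but $\partial_\theta\pi_\theta(R\mid s)=1$. The convention $0\cdot\nabla\log\pi=0$ discards the $Q^{\pi_0}(s,R)$ terms and returns $-\frac{1}{1-\gamma}\mathbb E_{s\sim d_\mu^{\pi_0}}[J^{\pi_0}(s)]<0$. The true one-sided derivative is $\frac{1}{1-\gamma}\mathbb E_{s\sim d_\mu^{\pi_0}}[Q^{\pi_0}(s,R)-Q^{\pi_0}(s,L)]>0$, which is the quantity the paper uses to exhibit the suboptimal minimum. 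So even the sign comes out wrong.

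The fix is to use the log form only where $\nabla_\alpha\pi_\alpha(a\mid s)=0$ whenever $\pi_\alpha(a\mid s)=0$. This holds, for example, when $\alpha$ is interior to $I$, or when $\pi_\alpha(\cdot\mid s)>0$ for every $s$ in the support of $d_\mu^{\pi_\alpha}$. Otherwise, read $\pi_\alpha\nabla_\alpha\log\pi_\alpha$ as $\nabla_\alpha\pi_\alpha$, i.e.\ keep the $\sum_a\nabla_\alpha\pi_\alpha\,Q^{\pi_\alpha}$ form you already proved. This matters in this paper, because the trivial simplex parametrization evaluates gradients at boundary points: boundary local minima and projected iterates.
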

\vspace{0ex}

Notice the expression for the gradient contains the standard 1-step $Q$-function. This $Q$-function is myopic in that only one step of a chosen action is taken before reverting to the policy $\pi_\alpha$. This myopicness is more apparent when compared to the policy gradient theorem for bandits $\nabla_\alpha J^{\pi_\alpha} = \mathbb E_{a \sim \pi_\alpha(\cdot)}[g(a) \nabla_\alpha\log \pi_\alpha(a)]$.
The representation takes a familiar form with $g$ replacing $Q^{\pi}$ and minor adjustments for the lack of a state or discount factor. This similarity begs the question of whether the policy gradient theorem for MDPs is effectively using information past the first timestep to create a satisfactory descent direction.
Indeed the myopicness presents some challenges when considering restricted policy classes.
We show this using our running example below.
\vspace{0ex}
\subsection{Running Example}
\label{running_example}

\begin{figure}[]
     \centering
     \begin{subfigure}[b]{0.45\textwidth}
                 \hspace{-8ex}\includegraphics[width=1.40\linewidth]{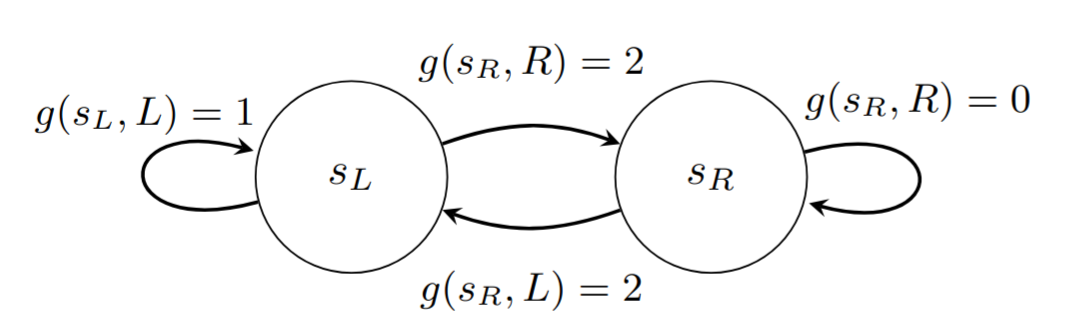}
             \vspace{3ex}
    \caption{Example from \cref{running_example}. Transitions are deterministic and incur costs shown above.}
    \label{two_state}
     \end{subfigure}
     \hfill
     \begin{subfigure}[b]{0.45\textwidth}
             \centering    \includegraphics[width=0.80\linewidth]{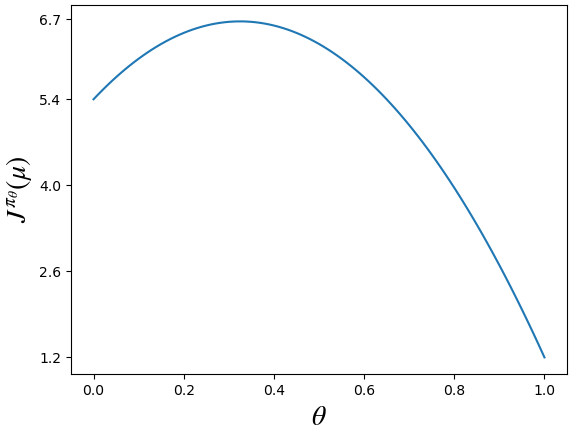}
    \caption{The value function $J^{\pi_\theta}(\mu)$ versus $\theta$ for the example in \cref{running_example}. There are two local minima at $\theta = 0, 1$ and a local maxima at $\theta \approx 0.32$.}
    \label{two_state_graph}
     \end{subfigure}
\end{figure}

To illustrate many of the points introduced in this work, we will refer frequently to a two state MDP example borrowed from \cite{bhandari2024global} seen in \cref{two_state}. Despite its minimal construction, it will contain a suboptimal critical point as seen in \cref{two_state_graph}.
In \cref{two_state}, the system can be in state $s_L$ or $s_R$ and transitioned between each other using a ``left action'' $L$ or ``right action'' $R$. Changing the state will incur cost 2 and staying at $s_L$ will incur a cost of 1.
The policy class is restricted to $\Pi_{res} = \{\pi_\theta: \theta\in[0,1]\}$ where $\pi_\theta$ takes action $R$ with probability $\theta$ regardless of the state and action $L$ otherwise.
With $\gamma = 0.8$ and $\mu = [0.6, 0.4]$ this creates a suboptimal local minima at $\theta = 0$ as seen in \cref{two_state_graph}.
To show why this is the case, when $\theta \approx 0$, the policy gradient theorem simplifies to: $\nabla_\theta J^{\pi_\theta}(\mu)$
$ \approx 4.6 (Q^{\pi_0}(s_L, R) - J^{\pi_0}(s_L))+ 0.4 (Q^{\pi_0}(s_R, R) - J^{\pi_0}(s_R)).$
Following \cref{two_state} to evaluate $Q^{\pi_0}(s_L, R)$, the trajectory starts at $s_L$ and initially takes action R, moving the state to $s_R$ and incurring a large cost. This is followed by the action L which will move back to $s_L$ incurring another large cost then remaining at $s_L$. This performs worse than $J^{\pi_0}(s_L)$ which simply remains at state $s_L$.
Therefore $Q^{\pi_0}(s_L, R) - J^{\pi_0}(s_L) > 0$ and the dominant first term turns positive indicating a positive slope and a suboptimal critical point at $\theta = 0$.
We later introduce a new gradient update utilizing a $k$-step $Q$-function that will allow for $k$-steps of evaluation of action R prior to returning to $s_L$. When action R is taken $k$ times, it gives the opportunity to reduce the cost by remaining at the no-cost state $s_R$ for longer periods of time before returning to $s_L$.
This allows the $Q$-function to ``adequately test'' a policy as part of the gradient computation and achieve near optimal performance for various descent algorithms.
\vspace{0ex}
\subsection{Origin of Myopicness}
\label{origin}
At first, the myopicness of the gradient may seem mysterious because the gradient straightforwardly presents the direction of steepest ascent and seems like it could always be used to create a good update direction. However, it is specifically how the gradient interacts with the independent sampling of the policy in the traditional 1-step evaluation scheme that causes myopicness. Intuitively, trajectories that use ``multi-step reasoning'' will be lost in higher order terms.
Concretely, in the two state MDP example from \cref{two_state}, suppose we begin at $\pi_0$ (i.e. $\theta=0$, policy always takes action L) and increment the parameter by a small $d\theta$. The policy $\pi_{d\theta}$ then takes action R with this infinitesimal $d\theta$. Further, we restrict the horizon to be a large H such that $\frac{\gamma^H}{1 - \gamma} g_{max}$ is small.
Notice that for $\mathbb E_{\tau \sim \pi_{d\theta}\lvert_\mu}[\sum_{t = 0}^{H - 1} \gamma^t g(s(t), a(t))]$
any trajectory that takes action R $m$-times will have a probability of $(d\theta)^m (1 - d\theta)^{H - m} \leq (d\theta)^m$ due to independence. Therefore, trajectories with $m \geq 2$ are lost in the higher order terms and will not appear as part of the (right hand side) derivative at 0 because $\lim_{\Delta\theta \rightarrow 0}\frac{(\Delta \theta)^m}{\Delta \theta} = 0$.
In other words, only trajectories which take action $R$ at most one time are considered as part of the computation. This then provides an intuition for the reason the 1-step $Q$-function appears in the gradient expression.
The primary issue that causes this myopicness is this independent evaluation of the policy at each step, that does not allow the gradient to ``see the effect'' of the change $d\theta$ across multiple steps.

\vspace{0ex}
\section{Breaking Myopicness with $k$-step PG}\label{sec:k-step}
\subsection{Correlated Policies}
\label{policy_def}

In this work, we introduce a new interpretation for the policy. In the standard definition of the MDP, stochastic policies are of the form $\pi:\mathcal S \rightarrow \Delta(\mathcal A)$ which takes an independent action at each state. However, another way of viewing a stochastic policy is as a distribution of deterministic policies.

\begin{definition}[Correlated Policy]
    Let $\Pi_{det}$ be a set of deterministic policies of the form $\pi_{det}: \mathcal S \rightarrow \mathcal A$. A correlated policy $\tilde\pi$ is a distribution over $\Pi_{det}$ or in other words, $\tilde\pi\in \Delta(\Pi_{det})$. An action can be taken at state $s$ by sampling $\pi_{det} \sim \tilde\pi$ and taking $\pi_{det}(s)$.
\end{definition}

Going forward, the restricted set of these correlated policies will be denoted $\tilde \Pi_{res}$.
The restricted policy class from the two state example in \cref{running_example}, can be viewed from a correlated policy perspective. Recall $\pi_L$ is the deterministic policy that takes action L at both states and $\pi_R$ the corresponding policy for action R.  The restricted policy class can then be described as $\Pi_{det} = \{\pi_L, \pi_R\}$ and $\tilde\Pi_{res} = \{\tilde\pi_\theta\lvert \tilde\pi_\theta = I_\theta \pi_R + (1 - I_\theta)\pi_L, \theta \in [0,1]\}$ where $I_\theta$ is an indicator function that is 1 with probability $\theta$ and 0 otherwise.
In a standard policy rollout, our proposed formulation would \emph{independently} at each step, sample $\pi_{det}\sim \tilde\pi$ and take $\pi_{det}(s)$ for current state $s$. This is equivalent to taking the traditional policy $\pi:\mathcal S \rightarrow \Delta(\mathcal A)$ defined implicitly by $\pi_{det}(s)$ where $\pi_{det}\overset{\mathrm{iid}}{\sim} \tilde\pi$. Therefore in the ``1-step evaluation'' in standard rollout (independent evaluations at each step), the correlated policy and its corresponding traditional policy are equivalent.
Later, we will introduce a ``$k$-step evaluation'' where this new representation will be required.

\vspace{0ex}
\subsection{Restriction Assumption}
\label{assumption}

For this work, we consider an arbitrarily restricted set of deterministic policies $\Pi_{det}$ determined by our application. The main assumption made is that the correlated policy class $\tilde\Pi_{res}$ is all possible distributions $\Delta(\Pi_{det})$. This ensures sufficient expressibility during the ``training phase.''
\begin{assumption}
    $\Pi_{det}$ is an arbitrary set of deterministic policies and $\tilde\Pi_{res}$ is all policies $\Delta(\Pi_{det})$.
\end{assumption}
The general outline of our solution framework is to obtain a restricted set of deterministic policies $\Pi_{det}$ from our problem description which are of the form $\pi_{det}: \mathcal S \rightarrow \mathcal A$. Then a $k$-step optimization procedure is performed over the correlated policies $\tilde \Pi_{res}$ to obtain a solution $\tilde\pi \in \tilde \Pi_{res}$. This may have undesired correlations (e.g. among states) so the sample $\pi_{det} \sim \tilde \pi$ will be presented as a solution which again has the form of a traditional uncorrelated policy $\pi_{det}: \mathcal S \rightarrow \mathcal A$.
The following is a minimal list of applications that our work will encompass.

\vspace{0ex}
\subsection{Applications}
\label{applications}
\textbf{Policy with State Aggregation:}
Consider an MDP environment with a set of observations $\mathcal O$ and an observation function $\mathcal O_{func}:\mathcal S \rightarrow \mathcal O$. We may define the set of deterministic policies that only depend on this observation. $\Pi_{det}$ will therefore be all possible deterministic policies of the form $\pi_{det}(s) = \pi_{det}^{+}(\mathcal O_{func}(s))$ where $\pi_{det}^{+}:\mathcal O \rightarrow \mathcal A$.
The two state example from \cref{running_example} is a state aggregation problem.
Real life scenarios involving policies with state-aggregation may appear by a necessity to reduce computation need (e.g. by grouping similar states into the same aggregated state). Other times, state aggregation may be a result of partial observability. For example, the state is a tuple $(x,y)$, but the agent only observes $x$.
These types of partially observable systems appear in applications in the medical field, computer vision, military applications and many more (see \cite{cassandra1998survey}). This is typically modeled as a POMDP which is generally intractable to solve. With the methods developed in this work, we will be able to produce unusually strong theoretical near optimality results for this setting.

\textbf{Independent Multi-Agents:} Frequently in multi-agent settings, every agent has its own state $\mathcal S_i$ which produces a global state $\mathcal S = \mathcal S_1 \times \ldots \times \mathcal S_n$ for agents $\mathcal N = \{1, \ldots, n\}$. Each agent then decides on an action $a_ i \in \mathcal A_i$ with a global action space of $\mathcal A = \mathcal A_1 \times \ldots \times \mathcal A_n$. Further, in cooperative independent agent environments, agents are only allowed to observe their own state but have rewards and transitions that may depend on other agents. Examples include traffic control \cite{varaiya2013max} and power systems \cite{zhao2014design}. Unfortunately, the general multi-agent extension of the POMDP called the Dec-POMDP is NEXP-Complete and generally intractable to solve \cite{bernstein2002complexity}.
In these environments, $\Pi_{det}$ will be all deterministic policies that map to $\pi_{det}(s) = (\pi_{det}^i(s_i): i \in \mathcal N)$ for all possible individual policies $\pi_{det}^i: \mathcal S_i \rightarrow \mathcal A_i$ for $i \in\mathcal N$ (see \cref{num_match} for an example).

\textbf{Decentralized Multi-Agents:}
Another common multi-agent setting is when cooperative decentralized agents act according to an observation of the state. Assume the same factored state and action space as in the independent multi-agents setting. Then at a state $s\in \mathcal S = \mathcal S_1\times \ldots \mathcal S_n$, agent $i$ acts according to some observation $O_{i,func}(s)$ where $\mathcal O_{i, func}: \mathcal S \rightarrow \mathcal O_i$ for some set of observations $\mathcal O_i$.
The restricted set of deterministic policies $\Pi_{det}$ is then $\pi_{det}(s) = (\pi^{i,+}_{det}(\mathcal O_{i,func}(s)): i\in \mathcal N)$ for all possible individual policies $\pi_{det}^{i,+}: \mathcal O_i \rightarrow \mathcal A_i$ for $i \in \mathcal N$ (see \cref{button} for an example).
Similarly, these applications in general  are also modeled using a Dec-POMDP which is generally seen as intractable to solve. Examples include robotics applications where agents can only view other agents within their vicinity \cite{omidshafiei2015decentralized, liu2017learning}. This also encompasses the networked RL literature where agents are connected in some graph with a fixed distribution and agents can observe their $\kappa$-hop neighbors \cite{qu2022scalable, qu2020scalableaverage, lin2021multi}.

\textbf{Group Decentralized Agents:} In the group decentralized setting, agents are grouped together based on some criteria, and the agents in that group all share a joint observation of each others state. For $s \in \mathcal S = \mathcal S_1 \times \ldots \times \mathcal S_n$, a grouping function exists $G:\mathcal S \rightarrow Part(\mathcal N)$ where $Part(\mathcal N)$ is the set of partitions over the agents. Then at each state $s$, each agent in the same partiton $g \in G(s)$ will observe the states of the agents in the group $s_g = (s_i: i \in g)$. $\Pi_{det}$ will be all deterministic policies of the form $\pi_{det}(s) = (\pi^g_{det}(s_g): g \in G( s))$ with $\pi_{det}^g:\mathcal S_g \rightarrow \mathcal A_g$ and $g \subset \mathcal N$.
The concept of group decentralized agents is expanded upon in the Locally Interdependent Multi-Agent MDP works \cite{deweese2024locally,deweese2025thinking} that can model various autonomous vehicle and robotics applications.
The outcome of policy gradient methods directly on the group decentralized policy class is still not well understood and the methods in this work can offer a solution.

\subsection{The $k$-step Evaluation}
Leveraging our new formulation of the policy, we will now introduce the $k$-step evaluation of a policy.
In the traditional 1-step evaluation described in more detail in \cref{policy_def} we sample $\pi_{det}\sim \tilde\pi$ from $\tilde\pi\in\tilde\Pi_{res}$ and take the policy $\pi_{det}(s)$ independently at each step. For the $k$-step evaluation, once $\pi_{det}\sim \tilde\pi$ is sampled, we will execute $\pi_{det}$ for $k$ iterations before sampling again. This will allow for a better ``measure of the quality'' of the deterministic policies sampled by $\tilde\pi$.
\begin{definition}[$k$-step value function]
For a correlated policy $\tilde\pi\in \tilde\Pi_{res}$, the $k$-step value function is $J^{\tilde\pi,k}(s) = \mathbb E_{\tau \sim \tilde\pi\lvert_{s,k}}[\sum_{t = 0}^\infty \gamma^t g(s(t),a(t))]$ where $\tau \sim \tilde\pi\lvert_{s, k}$ represents the trajectory starting at state $s$ generated by repeatedly sampling $\pi_{det}\sim\tilde\pi$ and taking $\pi_{det}$ for $k$ steps (we sample from $\tilde \pi$ at timesteps $0,k ,2k,\ldots$ and the sampled deterministic policy is used in the intermediate timesteps).
\end{definition}
Notice that  this $k$-step value function is still a function on $\mathcal S$ but it uses correlated policies and is rolled out in a different way, altering the probabilities of the sampled trajectories. We now define the analogous $k$-step $Q$-function.

\begin{definition}[$k$-step $Q$-function]

For a correlated policy $\tilde\pi \in \tilde\Pi_{res}$, the $k$-step $Q$-function is  $Q^{\tilde\pi,k}(s, \pi_{det}') = \mathbb E_{\tau \sim \tilde\pi\lvert_{s, \pi_{det}', k}}[\sum_{t = 0}^\infty \gamma^t g(s(t),a(t))]$ where $\tau \sim \tilde\pi\lvert_{s, \pi'_{det}, k}$ represents the trajectory starting at state $s$ and takes $\pi'_{det}$ for the initial $k$ steps before repeatedly using $\tilde\pi$ to sample and execute for $k$ iterations ($\pi_{det}'$ is used for the first $k$ timesteps followed by taking samples of $\tilde\pi$ at timesteps $k, 2k,\ldots$ and using the sampled deterministic policy for the intermediate timesteps).

\end{definition}
\label{tradeoff}
\textbf{Computing the $k$-step $Q$-function.}
This $Q$-function should not be computed in whole as the second argument is dependent on a potentially large number of deterministic policies in $\Pi_{det}$ (much more than the number of actions as in the traditional $Q$-function).
In practice, we suggest building an estimator for $J^{\tilde\pi}(s)$ and choosing a reasonable $k$ for the problem so $Q^{\tilde\pi,k}(s, \pi'_{det})$ can quickly be reconstructed. Notice in the case of deterministic dynamics $Q^{\tilde\pi,k}(s, \pi'_{det})$ can be constructed from $J^{\tilde\pi}(s)$ quite quickly as $Q^{\tilde\pi,k}(s, \pi'_{det}) = g(s, \pi'_{det}(s)) + \gamma g(s_1, \pi'_{det}(s_1)) + \ldots + \gamma^k J^{\tilde\pi}(s_k)$ would only require a single $k$-step roll out. In non-deterministic cases, Monte-Carlo estimation can be used along with various variance reduction methods.
This $k$-step $Q$-function has a tradeoff between the computation time and myopicness. When $k = 1$, the standard $Q$-function $Q^{\tilde\pi}(s,\pi_{det}'(s))$ can be used but is mostly an evaluation of the performance of $\tilde\pi$. When $k \rightarrow \infty$, the $k$-step $Q$-function becomes $Q^{\tilde\pi,k}(s, \pi_{det}') \rightarrow J^{\pi'_{det}}(s)$ which is excellent in determining if $\pi'_{det}$ is a ``good policy'' but requires a full computation of $J^{\pi'_{det}}(s)$ for every $\pi'_{det}$ and state $s$ which is not feasible.

\vspace{0ex}
\subsection{The $k$-step Policy Gradient}
\label{k_pg}

The gradient in this $k$-step regime can be expressed with the following variant of the Policy Gradient Theorem.
\begin{theorem}[$k$-step Policy Gradient Theorem]
Let $\tilde \Pi_{res}$ be a restricted policy class parametrized by $\alpha \in I$ and $J^{\tilde \pi_\alpha,k}(\mu)$ is differentiable with respect to $\alpha$. Then the following equality holds:
\vspace{2ex}
\\\hspace*{10ex}
$\nabla_\alpha J^{\tilde\pi_\alpha,k}(\mu)=
     \frac{1}{1 - \gamma^{k}} \mathbb E_{s \sim d_{\mu}^{\tilde\pi_\alpha, k}}\mathbb E_{\pi_{det}\sim \tilde\pi_\alpha}[ Q^{\tilde\pi_\alpha, k}(s, \pi_{det})\nabla_\alpha\log\tilde\pi_\alpha(\pi_{det})].$
\label{kpg_representation}
\end{theorem}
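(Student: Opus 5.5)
We reduce the $k$-step problem to a standard MDP and invoke the Policy Gradient Theorem (\cref{pg}) on it. Define a \emph{lifted} MDP whose states are $\mathcal S$, whose actions are the deterministic policies $\pi_{det}\in\Pi_{det}$, and whose discount is $\gamma^k$. In state $s$ under action $\pi_{det}$, the lifted MDP runs $\pi_{det}$ in the original MDP for $k$ steps starting at $s$. It incurs the aggregated cost $\bar g(s,\pi_{det}) = \mathbb E[\sum_{t=0}^{k-1}\gamma^t g(s(t),\pi_{det}(s(t)))]$ and transitions to the state $s(k)$ reached after those $k$ steps, via the kernel $\bar P(\cdot\mid s,\pi_{det})$. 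A correlated policy $\tilde\pi_\alpha\in\Delta(\Pi_{det})$ is then an ordinary (state-independent) stochastic policy in this lifted MDP, sampling a lifted action i.i.d.\ at each lifted step.

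The first step is to check that the objects match. Splitting the infinite sum in $J^{\tilde\pi,k}(s)$ into blocks $[jk,(j+1)k)$, the $j$-th block contributes $\gamma^{jk}\bar g(s(jk),\pi^{(j)}_{det})$ by the tower property, conditioning on $s(jk)$ and the $j$-th sample. The samples $\pi^{(j)}_{det}$ are i.i.d.\ from $\tilde\pi$ and independent of the past given $s(jk)$. Hence $J^{\tilde\pi,k}(s)$ equals the lifted value function $\bar J^{\tilde\pi}(s)$. By the same argument, $Q^{\tilde\pi,k}(s,\pi'_{det}) = \bar g(s,\pi'_{det}) + \gamma^k\mathbb E_{s'\sim\bar P(\cdot\mid s,\pi'_{det})}\bar J^{\tilde\pi}(s')$ is exactly the lifted $Q$-function $\bar Q^{\tilde\pi}(s,\pi'_{det})$. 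We also identify $d_\mu^{\tilde\pi,k}$ with the lifted occupancy $(1-\gamma^k)\sum_{j\ge 0}\gamma^{jk}P(s(jk)=s\mid\mu,\tilde\pi,k)$. This is the natural definition, and we would state it explicitly, since the statement uses $d_\mu^{\tilde\pi_\alpha,k}$ without defining it here.

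Applying \cref{pg} to the lifted MDP, with discount $\gamma^k$, action space $\Pi_{det}$, and policy $\pi_\alpha(\pi_{det}\mid s)=\tilde\pi_\alpha(\pi_{det})$, immediately gives
$\nabla_\alpha J^{\tilde\pi_\alpha,k}(\mu) = \frac{1}{1-\gamma^k}\mathbb E_{s\sim d_\mu^{\tilde\pi_\alpha,k}}\mathbb E_{\pi_{det}\sim\tilde\pi_\alpha}[Q^{\tilde\pi_\alpha,k}(s,\pi_{det})\nabla_\alpha\log\tilde\pi_\alpha(\pi_{det})]$. The score has no $s$-dependence because the correlated policy is state-independent.

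The main obstacle is the legitimacy of the reduction when $\Pi_{det}$ is infinite or large. \Cref{pg} is typically proven for finite action sets and bounded costs. Here $\bar g$ is bounded by $g_{max}(1-\gamma^k)/(1-\gamma)$, so boundedness holds. If $\Pi_{det}$ is finite (as in all the listed applications, since $\mathcal S,\mathcal A$ are finite), the reduction is exact. Otherwise, we would redo the standard proof directly. First differentiate the lifted Bellman equation $\bar J(s)=\sum_{\pi_{det}}\tilde\pi_\alpha(\pi_{det})\bar Q(s,\pi_{det})$. Then unroll the recursion $\nabla\bar J(s) = \mathbb E_{\pi_{det}}[\bar Q\nabla\log\tilde\pi_\alpha] + \gamma^k\mathbb E[\nabla\bar J(s(k))]$, summing the geometric series using the assumed differentiability and the uniform bounds to exchange limits and derivatives.
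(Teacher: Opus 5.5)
Your proof is correct, but it takes a different route from the paper. The paper never builds a lifted MDP. It writes $J^{\tilde\pi_\alpha,k}(s_0)=\sum_{\pi_{det}}\tilde\pi_\alpha(\pi_{det})\,Q^{\tilde\pi_\alpha,k}(s_0,\pi_{det})$ and applies the product rule. It then uses that the first $k$ steps of $Q^{\tilde\pi_\alpha,k}(s_0,\pi_{det})$ do not depend on $\alpha$ to get $\nabla_\alpha Q^{\tilde\pi_\alpha,k}(s_0,\pi_{det})=\gamma^k\,\mathbb E_{\tau^k\sim\pi_{det}\lvert_{s_0}}[\nabla_\alpha J^{\tilde\pi_\alpha,k}(s_k)]$. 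Finally it unrolls this recursion into the geometric sum that defines $d_{s_0}^{\tilde\pi_\alpha,k}$. In other words, it redoes the standard policy gradient proof by hand, which is exactly your fallback for infinite $\Pi_{det}$. Your main argument instead treats the $k$-step rollout as an ordinary MDP with actions $\Pi_{det}$, block costs $\bar g$, kernel $\bar P$ and discount $\gamma^k$. You check, correctly via the tower property and the independence of the block samples, that $J^{\tilde\pi,k}$, $Q^{\tilde\pi,k}$ and $d_\mu^{\tilde\pi,k}$ coincide with the lifted objects; your occupancy formula is exactly the one the paper gives right after the theorem. You then invoke \cref{pg} as a black box. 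The reduction buys conceptual economy: the paper's key step about $\nabla_\alpha Q^{\tilde\pi_\alpha,k}$ becomes just the lifted Bellman equation. The same lifting would also give the paper's $k$-step performance difference lemma and similar $\gamma\to\gamma^k$ statements for free. Its cost is the bookkeeping needed to certify the equivalence, plus dependence on whatever regularity the cited theorem requires. That means a finite action set, which holds whenever $\mathcal S$ and $\mathcal A$ are finite since $\Pi_{det}\subseteq\mathcal A^{\mathcal S}$, and differentiability of $\tilde\pi_\alpha$, which the paper also uses implicitly. The paper's direct computation is shorter and self-contained. However, it leaves the limit interchange in its ``$\ldots$'' unrolling unjustified, a point your fallback sketch at least flags.
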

Here, $d_{s_0}^{\tilde\pi_\alpha, k} = (1 - \gamma^k)\sum_{m = 0}^\infty \gamma^{mk} P(s_{mk} = s\lvert s_0, \tilde\pi_\alpha, k)$ is the $k$-step discounted state occupancy measure. The proof is shown in \cref{kpg_proof}.

Compared to the standard policy gradient theorem in \cref{pg}, this $k$-step policy gradient theorem incorporates descent over the correlated policies as well as including a $k$-step $Q$-function in its expression, which tests $\pi_{det}$ for $k$ steps to compute the gradient. This will overcome the myopic traits of standard policy gradient (see \cref{main_results}).
Consider using this gradient in the two state example from \cref{running_example} expressed with correlated policies in \cref{policy_def}. For $\tilde\pi_\theta$ with $\theta \approx 0$, the $k$-step $Q$-function in the gradient expression will include the value of taking action $R$ (or the deterministic policy $\pi_R$) for $k$ times before reverting to $\pi_\theta \approx \pi_L$. This will allow the $k$-step policy gradient method to overcome the myopicness of traditional policy gradients (see \cref{simulation}).

\vspace{0ex}
\section{Theoretical Guarantees}\label{theoretical_guarantees}
\label{main_results}
We show that when policies are improved with the $k$-step policy gradient introduced in \cref{k_pg}, strong theoretical bounds can be proved.
Recall that we are assuming an arbitrarily restricted class of deterministic policies $\Pi_{det}$ and learning a policy in $\tilde\Pi_{res}$ which is all correlated policies of the form $\tilde\pi\in \Delta(\Pi_{det})$ (see \cref{assumption}).
The optimality guarantees in this section are relative to the optimal deterministic policy in the policy class $\pi_{det}^* \in \Pi_{det}$. For deterministic policies, the $k$-step evaluation and the traditional 1-step evaluations are equivalent ($J^{\pi^*_{det},k}(\mu) = J^{\pi^*_{det}}(\mu)$) and therefore, this optimal deterministic policy can act as a fixed point of comparison for both settings and for all $k$.

\vspace{0ex}
\subsection{Near Optimal Critical Points} To begin, we show that all zero gradient critical points and local minima on the boundary are theoretically guaranteed to be exponentially close in performance to the optimal deterministic policy with respect to $k$. This demonstrates a strong and general near optimality bound that is rarely seen for these restricted policy classes.
\begin{theorem}
\label{crit_guarantee}
Consider the policy class $\tilde\Pi_{res}$ with the assumptions from \cref{assumption} with the trivial parametrization (elements of $\tilde \Pi_{res}$ are directly represented as elements of the simplex $\Delta(\Pi_{det})$). Assume $J^{\tilde\pi,k}(\mu)$ is differentiable with respect to $\tilde \pi$.
Let $\tilde\pi_{crit} \in \tilde\Pi_{res}$ be a zero-gradient critical point or a local minima on the boundary.
Then, the following performance bound is satisfied:
\vspace{1ex}
\\
\hspace*{25ex}$\mathbb E_{\pi_{det}\sim\tilde\pi_{crit}}[J^{\pi_{det}}(\mu)] - J^{\pi_{det}^*}(\mu) \leq 8\frac{\gamma^k}{1 - \gamma}g_{max}$
\end{theorem}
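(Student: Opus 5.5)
The plan is to avoid distribution-mismatch arguments entirely. I would instead compare the $k$-step value $J^{\tilde\pi,k}(\mu)$ with the \emph{linear} surrogate
\[
L(\tilde\pi)\;:=\;\mathbb E_{\pi_{det}\sim\tilde\pi}[J^{\pi_{det}}(\mu)]=\sum_{\pi\in\Pi_{det}}\tilde\pi(\pi)\,J^{\pi}(\mu),
\]
whose directional derivatives are explicit. The claim is that $J^{\tilde\pi,k}(\mu)$ is a $\gamma^k$-small perturbation of $L$, both in value and in directional derivative along simplex directions. Once this is shown, the first-order optimality condition for $J^{\cdot,k}(\mu)$ at $\tilde\pi_{crit}$ transfers to $L$ up to $O(\gamma^k g_{max}/(1-\gamma))$. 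Since $L$ is linear, an approximate first-order condition for $L$ is directly an approximate global optimality statement.

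\textbf{Step 1: first-order condition.} With the trivial parametrization, a zero-gradient point or a boundary local minimum satisfies
\[
\frac{d}{dt}\Big|_{t=0^+} J^{\tilde\pi_t,k}(\mu)\;\ge\; 0,\qquad \tilde\pi_t=(1-t)\tilde\pi_{crit}+t\,\delta_{\pi^*_{det}},
\]
because this segment stays in the simplex. By \cref{kpg_representation} this derivative equals
\[
\frac{1}{1-\gamma^k}\,\mathbb E_{s\sim d^{\tilde\pi_{crit},k}_\mu}\big[Q^{\tilde\pi_{crit},k}(s,\pi^*_{det})-J^{\tilde\pi_{crit},k}(s)\big].
\]
Along the same segment, $\frac{d}{dt}L(\tilde\pi_t)=J^{\pi^*_{det}}(\mu)-L(\tilde\pi_{crit})$ exactly. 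Therefore it suffices to show that the difference $D(\tilde\pi):=J^{\tilde\pi,k}(\mu)-L(\tilde\pi)$ has directional derivative bounded by roughly $6\gamma^k g_{max}/(1-\gamma)$ in absolute value along such segments. Then
\[
0\le \frac{d}{dt}J^{\tilde\pi_t,k}(\mu)\le J^{\pi^*_{det}}(\mu)-L(\tilde\pi_{crit})+C\,\frac{\gamma^k g_{max}}{1-\gamma},
\]
which is the theorem, with the constants arranged to total $8$.

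\textbf{Step 2: block recursion for $D$.} Let $c_\pi(s)$ be the discounted cost of running $\pi$ for $k$ steps from $s$, and let $P^k_\pi$ be the $k$-step transition kernel under $\pi$. Then
\[
J^{\tilde\pi,k}=\sum_\pi\tilde\pi(\pi)\big(c_\pi+\gamma^kP^k_\pi J^{\tilde\pi,k}\big),\qquad J^{\pi}=c_\pi+\gamma^kP^k_\pi J^{\pi}.
\]
Subtracting gives the pointwise identity
\[
D(\tilde\pi)(s)=\gamma^k\sum_\pi\tilde\pi(\pi)\,\big[P^k_\pi\big(J^{\tilde\pi,k}-J^{\pi}\big)\big](s).
\]
This identity immediately yields $|D|\le 2\gamma^k g_{max}/(1-\gamma)$. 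Differentiating it along the segment produces two kinds of terms.
\begin{itemize}
\item The change in the mixing weights contributes $\gamma^k\big(P^k_{\pi^*}(J^{\tilde\pi,k}-J^{\pi^*})-\sum_\pi\tilde\pi(\pi)P^k_\pi(J^{\tilde\pi,k}-J^\pi)\big)$. Each bracket has magnitude at most $2g_{max}/(1-\gamma)$, so this term is at most $4\gamma^k g_{max}/(1-\gamma)$.
\item The second term is $\gamma^k\sum_\pi\tilde\pi(\pi)P^k_\pi\,\frac{d}{dt}J^{\tilde\pi_t,k}$.
\end{itemize}

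\textbf{Step 3 (main obstacle): bounding $\frac{d}{dt}J^{\tilde\pi_t,k}$ uniformly over states.} This derivative appears inside the second term with a $\gamma^k$ prefactor. I would differentiate the block Bellman equation
\[
J^{\tilde\pi,k}=\bar c+\gamma^k\bar P J^{\tilde\pi,k},\qquad \bar c=\sum_\pi\tilde\pi(\pi)c_\pi,\quad \bar P=\sum_\pi\tilde\pi(\pi)P^k_\pi .
\]
This gives
\[
\frac{d}{dt}J=(I-\gamma^k\bar P)^{-1}\big[(Q^{k}(\cdot,\pi^*)-J^{\tilde\pi,k})\big].
\]
Crudely, the sup norm of this is at most $\frac{1}{1-\gamma^k}\cdot\frac{2g_{max}}{1-\gamma}$. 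Multiplied by $\gamma^k$, that is too weak by a factor $1/(1-\gamma^k)$.

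To remove this factor, I would not bound the two terms separately. Instead I would iterate the identity for $D$ as a fixed-point equation, writing $\frac{d}{dt}D=\gamma^k\bar P\,\frac{d}{dt}D+(\text{bounded source})$. The source has size $O(\gamma^k g_{max}/(1-\gamma))$. I would then use that the resolvent acting on $\gamma^k\times$(a difference of value functions) telescopes into a difference of two discounted costs. That difference is again bounded by $2g_{max}/(1-\gamma)$ times $\gamma^k$ rather than amplified.

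If this telescoping fails, the fallback is to accept a bound of the form $\frac{\gamma^k}{1-\gamma^k}\cdot\frac{g_{max}}{1-\gamma}$. The counting of the two $2\gamma^k g_{max}/(1-\gamma)$ value-level errors plus the derivative-level error into the constant $8$ is the part I expect to need the most care.
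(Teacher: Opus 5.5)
There is a genuine gap. Your derivative formula in Step 1 is right, and so is the identity for $D$ in Step 2. The resulting bound $|D|\le 2\gamma^k g_{max}/(1-\gamma)$ is exactly the paper's last step, the conversion from $J^{\tilde\pi_{crit},k}(\mu)$ to $\mathbb E_{\pi_{det}\sim\tilde\pi_{crit}}[J^{\pi_{det}}(\mu)]$.

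The problem is the sufficient condition you reduce to. You need $\frac{d}{dt}D\le C\gamma^k g_{max}/(1-\gamma)$ with $C$ independent of $\gamma^k$. This is false even one-sidedly, even at a boundary local minimum, and even in the direction of $\pi^*_{det}$. So no telescoping in Step 3 can deliver it.

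Here is an example. Take a single observation, so $\Pi_{det}=\{\pi_1,\pi_2\}$ are the two constant-action policies, and let $\mu=\delta_{s_0}$.
\begin{itemize}
\item Action $a_i$ moves $s_0$ to $s_i$.
\item At $s_i$, action $a_i$ stays, and the other action moves to an absorbing state $s_b$.
\item The cost is $g_{max}$ at $s_b$, $-\epsilon$ at $s_1$, and $0$ elsewhere.
\end{itemize}
Then $\pi^*_{det}=\pi_1$, and for small $\epsilon$ the vertex $\delta_{\pi_2}$ is a boundary local minimum, with $\frac{d}{dt}L=-\gamma\epsilon/(1-\gamma)$. On the other hand, $d^{\delta_{\pi_2},k}_\mu=(1-\gamma^k)\delta_{s_0}+\gamma^k\delta_{s_2}$. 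So the $k$-step derivative toward $\delta_{\pi_1}$ is $\frac{\gamma^{k+1}g_{max}}{1-\gamma}\bigl(1+\frac{1}{1-\gamma^k}\bigr)-O(\epsilon)$. Its ratio to $\gamma^k g_{max}/(1-\gamma)$ is unbounded as $\gamma^k\to 1$. The factor $\frac{1}{1-\gamma^k}$ is genuine: it is the global prefactor in the $k$-step policy gradient theorem, and $\frac{d}{dt}L$ does not carry it.

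The missing idea is that criticality is only a sign condition, so you may multiply by $1-\gamma^k$ before comparing with $L$. This is the paper's route (its approximate gradient dominance lemma). Start from $(1-\gamma^k)\frac{d}{dt}J^{\tilde\pi_t,k}(\mu)=\mathbb E_{s\sim d^{\tilde\pi,k}_\mu}[Q^{\tilde\pi,k}(s,\pi^*_{det})-J^{\tilde\pi,k}(s)]$. Three estimates then give the constant $8$:
\begin{itemize}
\item Because $d^{\tilde\pi,k}_\mu$ puts mass $1-\gamma^k$ on $\mu$, you have $\|d^{\tilde\pi,k}_\mu-\mu\|_1\le 2\gamma^k$. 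Replacing $d^{\tilde\pi,k}_\mu$ by $\mu$ therefore costs $4\gamma^k g_{max}/(1-\gamma)$.
\item The identity $Q^{\tilde\pi,k}(s,\pi^*_{det})-J^{\pi^*_{det}}(s)=\gamma^k\mathbb E[J^{\tilde\pi,k}(s_k)-J^{\pi^*_{det}}(s_k)]$ costs $2\gamma^k g_{max}/(1-\gamma)$.
\item Your bound on $D$ costs the final $2\gamma^k g_{max}/(1-\gamma)$.
\end{itemize}

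Alternatively, your fallback does suffice, but only with an observation you did not make. Carrying your Step 2 and Step 3 estimates honestly gives $\frac{d}{dt}D\le\frac{\gamma^k g_{max}}{1-\gamma}\bigl(4+\frac{2}{1-\gamma^k}\bigr)$. Since $\mathbb E_{\pi_{det}\sim\tilde\pi}[J^{\pi_{det}}(\mu)]-J^{\pi^*_{det}}(\mu)\le 2g_{max}/(1-\gamma)$ always, the theorem is vacuous when $\gamma^k\ge 1/4$. When $\gamma^k<1/4$, the bracket is below $20/3<8$. As written, with neither the rescaling nor this case split, your proposal only yields a bound of order $\frac{\gamma^k}{1-\gamma^k}\cdot\frac{g_{max}}{1-\gamma}$, not the stated one.
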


Notice that as $k$ increases, the performance of ALL specified critical points in the system are pushed towards the optimal deterministic policy exponentially fast in $k$. Further, as will be shown in the next subsections, descent using $k$-step evaluations will converge to a theoretically near optimal solution despite considering a broad class of restricted policies that may have poor 1-step local minima as in \cref{two_state}.
Lastly, we note \cref{closed_form} is used to prove this result and acts as an approximate gradient dominance condition that will take the place of the usual gradient dominance condition as in \cite{agarwal2021theory} for the following descent proofs.

\vspace{0ex}
\subsection{Projected Gradient Descent} Next, despite only assuming smoothness, performing projected gradient descent and mirror descent will satisfy this exponential performance bound (\cref{crit_guarantee}) in $O(\frac{1}{T})$ iterations. Specifically, projected gradient descent on the trivially parametrized policy $\tilde \Pi_{res}$ (in other words $\Delta(\Pi_{det})$) with updates $\tilde\pi_{t + 1} \leftarrow Proj_{\tilde \Pi_{res}}(\tilde \pi_t - \eta \nabla J^{\tilde \pi_t,k}(\mu))$ achieves the following (proved in \cref{part2_proj}).
\begin{theorem}
\label{projected_gd_bound}
Suppose $J^{\tilde\pi,k}(\mu)$ is differentiable and $\beta$-smooth in $\tilde\pi$. Then, projected gradient descent on the trivially parametrized $\tilde\Pi_{res}$ with learning rate $\eta = \frac{1}{\beta}$ converges. Further, if $\tilde\pi_{T}$ is the T-th timestep in projected gradient descent with the trivial parametrization, then the following performance guarantee is satisfied:
\\
\hspace*{15ex}$
\mathbb E_{\pi_{det}\sim\tilde\pi_{T}}[J^{\pi_{det}}(\mu)] - J^{\pi_{det}^*}(\mu) \leq  8\frac{\gamma^k}{1 - \gamma}g_{max} + \frac{1}{T}\bigg(\frac{\beta}{2}\|\tilde\pi^* - \tilde \pi_0\|_2^2\bigg)
$
\end{theorem}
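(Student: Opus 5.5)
We would prove \cref{projected_gd_bound} by combining the standard descent lemma for projected gradient descent on a smooth function over a convex set with an approximate gradient dominance inequality. That inequality is the one the paper refers to as \cref{closed_form}, which underlies \cref{crit_guarantee}. Write $f(\tilde\pi) = J^{\tilde\pi,k}(\mu)$ on the simplex $\Delta(\Pi_{det})$. We take $\tilde\pi^*$ to be the point mass on $\pi^*_{det}$, so that $f(\tilde\pi^*) = J^{\pi^*_{det}}(\mu)$.

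The first ingredient we expect from \cref{closed_form} is a linearized comparison of the form
\[
f(\tilde\pi) - f(\tilde\pi^*) \le \langle \nabla f(\tilde\pi), \tilde\pi - \tilde\pi^*\rangle + c\,\frac{\gamma^k}{1-\gamma} g_{max}.
\]
This is the analogue of the performance difference lemma. By \cref{kpg_representation}, the gradient coordinate at $\pi_{det}$ is $\frac{1}{1-\gamma^k}\mathbb E_{s\sim d_\mu^{\tilde\pi,k}}[Q^{\tilde\pi,k}(s,\pi_{det})]$. Running $\pi^*_{det}$ through $k$-step blocks and telescoping the $k$-step performance difference, we expect the left side to be controlled by $\langle \nabla f, \tilde\pi - \tilde\pi^*\rangle$ up to an $O(\gamma^k g_{max}/(1-\gamma))$ error.

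The second ingredient compares the target quantity with $f$. We need
\[
\mathbb E_{\pi_{det}\sim\tilde\pi}[J^{\pi_{det}}(\mu)] \le f(\tilde\pi) + c'\frac{\gamma^k}{1-\gamma}g_{max}.
\]
This should hold because both expressions agree on the first $k$ steps (one sampled deterministic policy executed for $k$ steps), and the tails are each bounded by $\gamma^k g_{max}/(1-\gamma)$. Together the two ingredients must produce the constant $8$ that appears in \cref{crit_guarantee}. We would reuse the accounting from that proof rather than rederive it.

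The third ingredient is the standard projected gradient analysis with $\eta = 1/\beta$. Smoothness gives monotone decrease $f(\tilde\pi_{t+1}) \le f(\tilde\pi_t)$. Combining smoothness with the projection (three-point) inequality gives, for any $u$ in the simplex,
\[
f(\tilde\pi_{t+1}) \le f(\tilde\pi_t) + \langle \nabla f(\tilde\pi_t), u - \tilde\pi_t\rangle + \frac{\beta}{2}\big(\|u-\tilde\pi_t\|^2 - \|u-\tilde\pi_{t+1}\|^2\big).
\]
Set $u=\tilde\pi^*$ and use the first ingredient to replace $\langle \nabla f(\tilde\pi_t), \tilde\pi^*-\tilde\pi_t\rangle$ by $f(\tilde\pi^*) - f(\tilde\pi_t) + \epsilon$. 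This yields
\[
f(\tilde\pi_{t+1}) - f(\tilde\pi^*) \le \epsilon + \frac{\beta}{2}\big(\|\tilde\pi^*-\tilde\pi_t\|^2 - \|\tilde\pi^*-\tilde\pi_{t+1}\|^2\big).
\]
Summing over $t=0,\dots,T-1$ telescopes the distance terms. Monotonicity then gives $f(\tilde\pi_T)-f(\tilde\pi^*) \le \epsilon + \frac{\beta}{2T}\|\tilde\pi^*-\tilde\pi_0\|^2$. Convergence of the iterates follows from the standard result that projected gradient descent on a smooth function with step $1/\beta$ over a compact convex set reaches stationary points; at minimum, the limit points are fixed points of the projection map.

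The main obstacle is getting the error budget right. The first ingredient must be stated in terms of $f$ (the $k$-step value) with error $\epsilon$, and the translation to $\mathbb E_{\pi_{det}\sim\tilde\pi_T}[J^{\pi_{det}}]$ must be added afterwards, so that the total error is exactly $8\frac{\gamma^k}{1-\gamma}g_{max}$ and the error terms are not accumulated $T$ times. If the first ingredient only holds in the reversed form $f(\tilde\pi)-f(\tilde\pi^*)\le \langle\nabla f,\tilde\pi-\tilde\pi^*\rangle$ plus a constant times the error, the telescoping still works as above, because the constant error is added once per step and then divided by $T$ after summing.
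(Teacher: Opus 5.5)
Your route is the paper's route written in Euclidean form. The paper proves this theorem as the case $\Phi=\frac12\|\cdot\|_2^2$, $\lambda=1$ of \cref{mirror_bound}. Your smoothness-plus-projection inequality, the monotone decrease, and your second ingredient (constant $2$) all match it. The gap is in your first ingredient.

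Because of the $\frac{1}{1-\gamma^k}$ normalization in \cref{kpg_representation}, \cref{closed_form} actually gives
\[
\langle\nabla f(\tilde\pi),\tilde\pi^*-\tilde\pi\rangle\le\frac{1}{1-\gamma^k}\big(f(\tilde\pi^*)-f(\tilde\pi)+\epsilon\big),\qquad \epsilon=\frac{6\gamma^k}{1-\gamma}g_{max}.
\]
Equivalently, $f(\tilde\pi)-f(\tilde\pi^*)\le(1-\gamma^k)\langle\nabla f(\tilde\pi),\tilde\pi-\tilde\pi^*\rangle+\epsilon$. This implies your coefficient-one version only where $f(\tilde\pi)-f(\tilde\pi^*)\ge\epsilon$.

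With an absolute constant $c$, your version is false for small $k$. Take $k=1$, $\mu=\delta_G$ and $\Pi_{det}=\{\pi_1,\pi^*\}$ on states $G,B,S$:
\begin{itemize}
\item $\pi_1$ stays put, at cost $-\gamma^2 g_{max}$ per step at $G$ and $+g_{max}$ at $B$;
\item $\pi^*$ moves $G\to B\to S$ at zero cost and then stays at $S$ at cost $-g_{max}$.
\end{itemize}
Both policies have value $-\frac{\gamma^2}{1-\gamma}g_{max}$, so $f(\delta(\pi_1))=f(\tilde\pi^*)$. Yet the directional derivative at $\delta(\pi_1)$ toward $\tilde\pi^*$ is $\frac{\gamma(1+\gamma)}{(1-\gamma)^2}g_{max}$. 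This exceeds $c\frac{\gamma}{1-\gamma}g_{max}$ once $\frac{1+\gamma}{1-\gamma}>c$.

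So the step ``replace $\langle\nabla f(\tilde\pi_t),\tilde\pi^*-\tilde\pi_t\rangle$ by $f(\tilde\pi^*)-f(\tilde\pi_t)+\epsilon$'' is not justified as written. Your closing remark about the error budget treats additive error only, not this multiplicative factor.

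Any of the following repairs works.
\begin{itemize}
\item \emph{Case split.} If $f(\tilde\pi_t)-f(\tilde\pi^*)<\epsilon$ for some $t<T$, monotone decrease already gives the bound at $T$. Otherwise $f(\tilde\pi^*)-f(\tilde\pi_t)+\epsilon\le 0$ for every $t<T$, so multiplying by $\frac{1}{1-\gamma^k}\ge 1$ only decreases it. Your per-step substitution and telescoping then go through verbatim.
\item \emph{The paper's order of operations.} Keep $-\langle\nabla f(\tilde\pi_t),\tilde\pi^*-\tilde\pi_t\rangle$ on the left and telescope first, bounding its average by
\[
R=\frac{\beta}{2T}\|\tilde\pi^*-\tilde\pi_0\|_2^2-\frac1T\sum_t\big(f(\tilde\pi_{t+1})-f(\tilde\pi_t)\big)\ge 0
\]
as in \cref{proj_gd_lemma}. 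Then insert \cref{closed_form}, multiply by $1-\gamma^k$, and use $(1-\gamma^k)R\le R$. This gives $\frac1T\sum_t f(\tilde\pi_{t+1})-f(\tilde\pi^*)\le\epsilon+\frac{\beta}{2T}\|\tilde\pi^*-\tilde\pi_0\|_2^2$.
\item \emph{Your own performance-difference sketch.} Comparing $d_\mu^{\tilde\pi,k}$ with $d_\mu^{\pi^*_{det},k}$ directly does give a coefficient-one inequality, but with error $\frac{4\gamma^k}{(1-\gamma^k)(1-\gamma)}g_{max}$. That is below $\epsilon$ when $\gamma^k<\frac14$. For $\gamma^k\ge\frac14$ the theorem is trivial, since its left side is at most $\frac{2g_{max}}{1-\gamma}$.
\end{itemize}

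Finally, the paper's ``converges'' only means the monotone decrease of $f$ from the descent lemma. Convergence of the iterates themselves does not follow from the fact you cite for nonconvex $f$, and it is not needed.
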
\vspace{0ex}
Prior works like \cite{agarwal2021theory, bhandari2024global} have achieved this rate for unrestricted policy classes where a gradient dominance condition is satisfied, whereas we may have suboptimal critical points as in \cref{running_example}.
In general, smoothness alone for projected gradient descent and mirror descent on general functions only provides convergence bounds of the form $\|\nabla J^\pi(\mu)\|_2 \leq O(\frac{1}{\sqrt T})$. In fact, in \cite{carmon2020lower, carmon2021lower} an information theoretic lower bound is shown guaranteeing this rate is tight for any algorithm on the class of all problems with bounded initial value and first-order smoothness. Our $O(\frac{1}{T})$ rate holds because the $k$-step evaluation induces an approximate gradient dominance condition (\cref{closed_form}). Therefore, this $k$-step policy gradient brings about a ``good landscape'' for the descent algorithms ensuring a faster descent to ``high performing'' solutions.

Crucially, our results are not dependent on the distribution mismatch coefficients of the form $||d_\mu^{\pi^*} / d_\mu^{\pi}||_\infty$ nor $||d_\mu^{\pi^*} / \mu||_\infty$ that is common in the literature \cite{agarwal2021theory,xiao2022convergence, yuan2022general}. This is a term that can easily be infinite when the supports of $\mu$ or $d_\mu^{\pi}$ does not subsume the supports of $d_\mu^{\pi^*}$ that depends on the unknown policy $\pi^*$. See \cref{all_sims} for examples of how suboptimal critical points can emerge in fully observable examples with poor exploration and how the $k$-step methods can escape.

\subsection{Mirror Descent}
Non-trivial parameterizations on $\tilde \Pi_{res}$ can be handled through the generalized mirror descent algorithm.
For $\tilde\pi_\alpha \in \tilde \Pi_{res}$ which is parametrized by $\alpha \in I$, mirror descent with mirror map $\Phi$ updates the policy $\tilde\pi_{\alpha_{t + 1}} \leftarrow \text{argmin}_{\tilde\pi_\alpha\in \tilde \Pi_{res}} \langle \nabla J^{\tilde\pi_{\alpha_t},k}(\mu), \tilde\pi_\alpha\rangle + \frac{1}{\eta}D_{\Phi}(\tilde\pi_\alpha, \tilde\pi_{\alpha_t})$ where $D_\Phi$ is the Bregman divergence.
We present the following result for mirror descent with these correlated policies (proved in \cref{part2_mirror}):

\begin{theorem}
\label{mirror_bound} Assume $J^{\tilde \pi,k}(\mu)$ is differentiable and $\beta$-smooth in $\tilde\pi$ with respect to some norm $\|\cdot\|$. Let $\Phi$ be a differentiable mirror map that is $\lambda$-strongly convex with respect to the same norm. Then mirror descent on the constraint set $\tilde\Pi_{res}$ with learning rate $\eta = \frac{\lambda}{\beta}$ converges. Further if $\tilde\pi_{T}$ is the T-th timestep in mirror descent. Then the following performance guarantee is satisfied:
\\
\hspace*{15ex}$
\mathbb E_{\pi_{det}\sim\tilde\pi_{T}}[J^{\pi_{det}}(\mu)] - J^{\pi_{det}^*}(\mu)
\leq 8\frac{\gamma^k}{1 - \gamma}g_{max} + \frac{1}{T}\bigg(\frac{\beta}{\lambda}D_{\Phi}(\tilde\pi^*,\tilde\pi_0)\bigg)
$
\end{theorem}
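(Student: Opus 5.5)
The plan is to combine the standard mirror-descent analysis for smooth objectives with the approximate gradient dominance inequality behind \cref{crit_guarantee}, which the paper attributes to \cref{closed_form}. I will use that inequality in the form
\[
J^{\tilde\pi,k}(\mu) - J^{\pi^*_{det}}(\mu) \le \langle \nabla J^{\tilde\pi,k}(\mu), \tilde\pi - \tilde\pi^*\rangle + c\,\frac{\gamma^k}{1-\gamma}g_{max},
\]
where $\tilde\pi^*$ is the point mass on $\pi^*_{det}$. This comes from a performance-difference argument at the $k$-step level. Expanding $J^{\tilde\pi,k} - J^{\pi^*_{det}}$ along the trajectory of $\pi^*_{det}$ sampled every $k$ steps gives terms $Q^{\tilde\pi,k}(s,\pi^*_{det}) - J^{\tilde\pi,k}(s)$. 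These are linear in $\tilde\pi$ and are exactly what appears in \cref{kpg_representation} under the trivial parametrization.

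The inequality has a weight mismatch: it carries $d^{\pi^*_{det},k}_\mu$ weights, while the gradient carries $d^{\tilde\pi,k}_\mu$ weights. I would close this gap by truncating after the first $k$-block. Every later block is discounted by $\gamma^k$, so both occupancies agree with $\mu$ up to $O(\gamma^k g_{max}/(1-\gamma))$. This truncation is why no distribution mismatch coefficient appears.

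I also need to pass between the $k$-step value and the quantity in the theorem. Since $\mathbb E_{\pi_{det}\sim\tilde\pi}[J^{\pi_{det}}(\mu)]$ differs from $J^{\tilde\pi,k}(\mu)$ only after the first block, the gap is at most $2\gamma^k g_{max}/(1-\gamma)$. Tracking these pieces should account for the constant 8.

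Next come two standard facts for mirror descent with $\eta=\lambda/\beta$, using $\beta$-smoothness and $\lambda$-strong convexity of $\Phi$:
\begin{itemize}
\item Sufficient decrease: the values $J^{\tilde\pi_t,k}(\mu)$ are monotonically nonincreasing, which also gives convergence, since $J$ is bounded below.
\item The three-point inequality: for any $u\in\tilde\Pi_{res}$,
\[
J^{\tilde\pi_{t+1},k}(\mu) \le J^{\tilde\pi_t,k}(\mu) + \langle \nabla J^{\tilde\pi_t,k}(\mu), u-\tilde\pi_t\rangle + \tfrac{1}{\eta}\big(D_\Phi(u,\tilde\pi_t) - D_\Phi(u,\tilde\pi_{t+1})\big).
\]
This follows from the optimality condition of the argmin step, the smoothness upper bound, and $D_\Phi(\tilde\pi_{t+1},\tilde\pi_t)\ge \tfrac{\lambda}{2}\|\tilde\pi_{t+1}-\tilde\pi_t\|^2$ absorbing the quadratic term.
\end{itemize}

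Now take $u=\tilde\pi^*$ and apply the approximate dominance inequality at $\tilde\pi_t$:
\[
J^{\tilde\pi_{t+1},k}(\mu) - J^{\pi^*_{det}}(\mu) \le c\,\tfrac{\gamma^k}{1-\gamma}g_{max} + \tfrac{\beta}{\lambda}\big(D_\Phi(\tilde\pi^*,\tilde\pi_t) - D_\Phi(\tilde\pi^*,\tilde\pi_{t+1})\big).
\]
Summing over $t=0,\dots,T-1$ telescopes the Bregman terms. Monotonicity then lets me replace the average of the left side by its value at $\tilde\pi_T$, which gives the $\tfrac{1}{T}\tfrac{\beta}{\lambda}D_\Phi(\tilde\pi^*,\tilde\pi_0)$ term. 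Converting $J^{\tilde\pi_T,k}$ to $\mathbb E_{\tilde\pi_T}[J^{\pi_{det}}]$ adds the remaining $\gamma^k$ slack. The constant must stay at 8 in total, so I would arrange that the dominance lemma already absorbs this conversion, as it does in \cref{crit_guarantee}.

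The main obstacle is the first step: establishing the approximate dominance inequality with linear dependence on $\tilde\pi - \tilde\pi^*$ and no mismatch factor. Once it is in hand, the rest is the textbook mirror-descent telescoping argument.
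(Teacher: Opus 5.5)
Your overall architecture matches the paper's. That includes the three-point inequality $J^{\tilde\pi_{t+1},k}(\mu)\le J^{\tilde\pi_t,k}(\mu)+\langle\nabla J^{\tilde\pi_t,k}(\mu),\tilde\pi^*-\tilde\pi_t\rangle+\frac{\beta}{\lambda}\big(D_\Phi(\tilde\pi^*,\tilde\pi_t)-D_\Phi(\tilde\pi^*,\tilde\pi_{t+1})\big)$, monotone decrease, telescoping, the last-iterate step, and the separate $\frac{2\gamma^k}{1-\gamma}g_{max}$ conversion.

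The gap is the dominance inequality you feed into it. With coefficient one on $\langle\nabla J^{\tilde\pi,k}(\mu),\tilde\pi-\tilde\pi^*\rangle$ and slack $c\frac{\gamma^k}{1-\gamma}g_{max}$ for a universal $c$, that inequality is false. Both \cref{kpg_representation} and the $k$-step performance difference lemma carry the prefactor $\frac{1}{1-\gamma^k}$. So your truncation argument ($\|d^{\tilde\pi,k}_\mu-d^{\pi^*_{det},k}_\mu\|_1\le 2\gamma^k$, advantages at most $\frac{2g_{max}}{1-\gamma}$) only yields slack $\frac{4\gamma^k}{(1-\gamma^k)(1-\gamma)}g_{max}$. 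The extra factor is genuinely needed. Take $k=1$ and $\mu=\delta_{s_0}$. At $s_0$ you either stay or go to $s_1$. At $s_1$ you move to an absorbing state with per-step cost $-g_{max}$ (``good'') or $+g_{max}$ (``bad''). All other costs are zero. Let $\tilde\pi=\delta(\text{stay},\text{bad})$, so $\pi^*_{det}=(\text{go},\text{good})$. Then $J^{\tilde\pi,1}(\mu)-J^{\pi^*_{det}}(\mu)=\frac{\gamma^2}{1-\gamma}g_{max}$, while $\langle\nabla J^{\tilde\pi,1}(\mu),\tilde\pi-\tilde\pi^*\rangle=-\frac{1}{1-\gamma}Q^{\tilde\pi,1}(s_0,\text{go})=-\frac{\gamma^2}{(1-\gamma)^2}g_{max}$. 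This forces $c\ge\frac{\gamma(2-\gamma)}{1-\gamma}$, which is unbounded as $\gamma\to 1$. An iterate (e.g.\ $\tilde\pi_0$) can sit at such a point, so your per-step substitution is not justified.

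The missing idea is to put the $\frac{1}{1-\gamma^k}$ on the value gap, as in \cref{closed_form}: $(1-\gamma^k)\langle\nabla J^{\tilde\pi,k}(\mu),\tilde\pi^*-\tilde\pi\rangle\le J^{\pi^*_{det}}(\mu)-J^{\tilde\pi,k}(\mu)+\frac{6\gamma^k}{1-\gamma}g_{max}$. The paper gets this without the performance difference lemma. It moves the gradient's expectation from $d^{\tilde\pi,k}_\mu$ to $\mu$ and writes $Q^{\tilde\pi,k}(s,\tilde\pi^*)-J^{\tilde\pi,k}(s)=J^{\pi^*_{det}}(s)-J^{\tilde\pi,k}(s)-\gamma^k\mathbb E[J^{\pi^*_{det}}(s_k)-J^{\tilde\pi,k}(s_k)]$.

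Because the inner product now carries $1-\gamma^k$, the combination step must change. Multiply your three-point inequality by $1-\gamma^k$ and use monotonicity in the form $(1-\gamma^k)J^{\tilde\pi_{t+1},k}(\mu)+\gamma^k J^{\tilde\pi_t,k}(\mu)\ge J^{\tilde\pi_{t+1},k}(\mu)$. This gives $J^{\tilde\pi_{t+1},k}(\mu)-J^{\pi^*_{det}}(\mu)\le\frac{6\gamma^k}{1-\gamma}g_{max}+(1-\gamma^k)\frac{\beta}{\lambda}\big(D_\Phi(\tilde\pi^*,\tilde\pi_t)-D_\Phi(\tilde\pi^*,\tilde\pi_{t+1})\big)$. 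The paper does the equivalent after averaging, using $1-\gamma^k\le 1$ and the sign of the descent sum. Telescoping plus the conversion then gives $6+2=8$. Note that the conversion is not absorbed into the lemma in \cref{crit_guarantee} either; there too it is $6+2$.

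Alternatively, you can keep your coefficient-one form with the honest slack $\frac{4\gamma^k}{(1-\gamma^k)(1-\gamma)}g_{max}$. Then $\frac{4}{1-\gamma^k}+2\le 8$ whenever $\gamma^k\le\frac13$. For $\gamma^k>\frac14$ the theorem is trivial, since its left side is at most $\frac{2g_{max}}{1-\gamma}$. The two cases cover everything.
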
\vspace{0ex}
For various parameterizations and choices of a mirror maps, the mirror descent updates often simplify.
In the standard MDP setting with an unrestricted policy class, when the expected negative Shannon entropy mirror map is used, the mirror descent update becomes the ubiquitous natural policy gradient.  The natural policy gradient handles the learning parameters through the fisher information matrix. When a softmax parametrization is used with the natural policy gradient, this results in a multiplicative-weights style update \cite{agarwal2021theory}. The natural policy gradient is also the basis for empirical algorithms such as trust region policy optimization (TRPO) and proximal policy optimization (PPO) \cite{schulman2015trust, schulman2017proximal}.
 However, for both the traditional setting and our correlated setting, the natural policy gradient update may require modifications to adhere to the policy class restriction and may have a different form dependent on the restriction $\tilde \Pi_{res}$ \cite{fox2022independent, chen2024decentralized, thomas2013projected}. Thus, for our purposes, we will focus on mirror descent in its most general form.
\vspace{0ex}
\section{Simulation of Two-State Example}
\label{simulation}
\begin{figure}
    \centering
 \hspace*{-5ex}\includegraphics[width=1.10\linewidth]{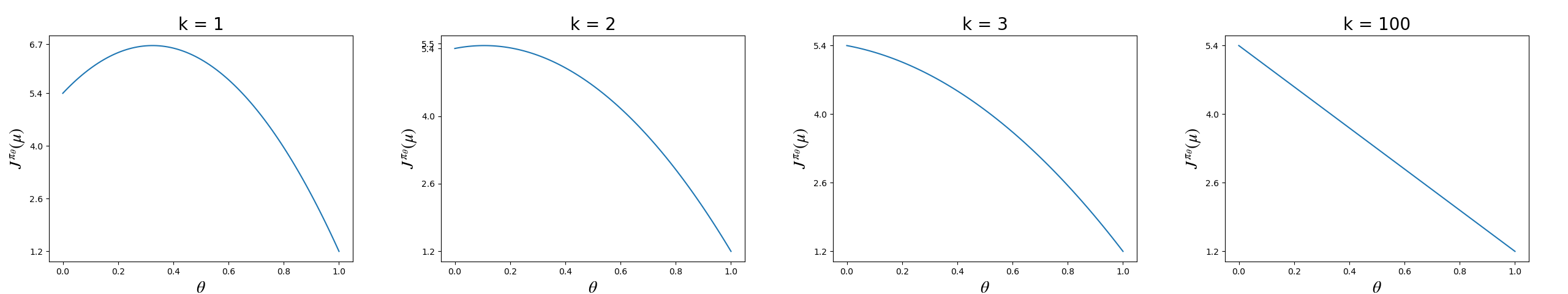}
    \caption{Critical points being removed as predicted by our theoretical bound discussed in \cref{theoretical_guarantees}.}
    \label{two_state_k}
    \vspace{0ex}
\end{figure}

We confirm our results in the two state example from \cref{running_example} shown in \cref{two_state_k} (as well as additional examples in \cref{all_sims}). In the $k = 1$ diagram, we see the 1-step evaluation from \cref{two_state} in \cref{running_example}. The three critical points are a local minima at $\theta = 0$, $\theta = 1$ and a local maximum at $\theta \approx 0.32$. As $k$ is increased, the curve straightens and the local minima at $\theta = 0$ ceases to be a local minima by $k_{esc} = 3$. Likewise the local maxima at $\theta \approx 0.3$ by $k_{esc} = 3$ becomes sloped and is no longer a local maxima. Increasing $k$ successfully removes the suboptimal critical points as predicted by our theoretical bound.
When $k$ is large as seen in the $k = 100$ diagram, the $k$-step evaluation turns affine. This is because when $k$ is large, the initial deterministic policy $\pi_{det}(s)\sim \tilde\pi$ from $\tilde\pi \in \tilde\Pi_{res}$ will determine the trajectory for longer periods. In other words, as $k \rightarrow \infty$, the value function simplifies to $J^{\tilde\pi_\theta,k}(\mu) \rightarrow \theta J^{\pi_R}(\mu) + (1 - \theta) J^{\pi_L}(\mu)$. In general, when $k$ is large, an affine structure will emerge (because of our assumptions in \cref{assumption}) creating a very easy space to optimize but the gradient will become progressively more expensive to compute as described in \cref{tradeoff}.

\vspace{0ex}
\section{Proof Sketch}
\label{proof_sketch}

The proofs in this work primarily focus on directional derivatives in MDPs. When a directional derivative is negative at a point, this guarantees that the gradient is non-zero and provides a direction for the descent algorithms to improve. Intuitively, the directional derivative provides an ``escape direction'' for the descent process.
The proofs are split into two parts. Firstly, we provide an approximate gradient dominance result. \cref{crit_guarantee} proved in \cref{part1_proof} will follow directly.
Secondly, we will show that descent (projected GD or mirror) on smooth functions will provide an average iterate bound on the negative directional derivative towards the optimal deterministic policy with a $O(\frac{1}{T})$ rate. This is again in contrast to the standard convergence rate on smooth functions that guarantees $\|\nabla J^\pi(\mu)\|_2 \leq O(\frac{1}{\sqrt T})$. Combining part 1) and part 2) will give \cref{projected_gd_bound} and \cref{mirror_bound} proved in \cref{part2_proj} and \cref{part2_mirror} respectively.

\textbf{Part 1)}
Let $\tilde\pi^*$ be the correlated policy with the Dirac delta of the optimal deterministic policy $\delta(\pi_{det}^*)\in \tilde\Pi_{res}$.
The directional derivative at the point $\tilde\pi$ towards $\tilde\pi^*$ is denoted as $\nabla_{\tilde\pi^* - \tilde\pi} J^{\tilde\pi,k}(\mu) = (\tilde\pi^* - \tilde\pi)\cdot \nabla_{\tilde\pi} J^{\tilde\pi,k}(\mu)$.
We show
in \cref{closed_form}:
\vspace{-0ex}
$
\nabla_{\tilde\pi^* - \tilde\pi} J^{\tilde\pi,k}(\mu) \leq  \frac{1}{1 - \gamma^k}   (J^{\tilde\pi^*,k}(\mu) - J^{\tilde\pi,k}(\mu))
+ \frac{6\gamma^k}{(1 - \gamma^k)(1 - \gamma)} g_{max}\nonumber
$.
This acts as an approximate gradient dominance condition and ensures that if the directional derivative $\nabla_{\tilde\pi^* - \tilde\pi_{crit}} J^{\tilde\pi_{crit},k}(\mu) \geq 0$ at a local minimum or $0$ at gradient critical point, then
$J^{\tilde\pi_{crit},k}(\mu) - J^{\pi_{det}^*}(\mu) \leq 6\frac{\gamma^k}{1 - \gamma}g_{max}$  and therefore $\mathbb E_{\pi_{det}\sim\tilde\pi_{crit}}[J^{\pi_{det}}(\mu)] - J^{\pi_{det}^*}(\mu) \leq 8\frac{\gamma^k}{1 - \gamma}g_{max}$ as desired (see \cref{part1_proof}).

\textbf{Part 2)} For general smooth functions, when performing projected gradient descent on a convex set or mirror descent with a $\lambda$-strongly convex mirror map, the negative directional derivative can be bounded by a $O(\frac{1}{T})$ term.
Intuitively, this is done through a modification of the traditional smooth + convex $O(\frac{1}{T})$ performance bound. The traditional smooth + convex proof uses the first order condition for convexity which lower bounds the negative directional derivative
$f(x) - f(y) \leq \nabla f(x)\cdot (x - y) = -\nabla_{y - x} f(x)$.
Since we will use \cref{closed_form} in the place of convexity, we will instead keep the negative directional derivative term and simplify. In the case of mirror descent with a learning rate $1/\beta$, this yields the following average iterate bound:
$
\frac{1}{T}\sum_{t = 0}^{T - 1}-\nabla_{\tilde\pi^* - \tilde \pi_t} J^{\tilde\pi_{t},k}(\mu)  \leq \frac{1}{T}\bigg(\frac{\beta}{\lambda}D_{\Phi}(\tilde\pi^*,\tilde\pi_0)\bigg) - \frac{1}{T}\sum_{t = 0}^{T - 1}(J^{\tilde\pi_{t + 1},k}(\mu) - J^{\tilde\pi_t,k}(\mu)) \nonumber
$
proved in \cref{proj_gd_lemma}.
The result from Part 1) then provides a lower bound on $-\nabla_{\tilde\pi^* - \tilde\pi_t} J^{\tilde\pi_{t}}(\mu)$. Noting that mirror descent on a smooth function reduces the value at each step (proved in \cref{descent_lemma_mirror}), yields the last-iterate bound in \cref{mirror_bound}.
\vspace{0ex}
\section{Future Works / Discussion}
\label{discussion}

Our results (\cref{main_results}) show a promising new angle of attack for solving MDPs with restricted policy classes in applications such as state aggregation, independent multi-agents, decentralized multi-agents, and group decentralized agents (\cref{applications}). This work presents a theoretical treatment, but the future empirical works seem just as promising.
Diffusion models have been used to represent the policy in various RL settings \cite{wang2022diffusion, ma2025efficient, ma2025reinforcement} and may model the potentially high dimensional correlated policies. Learning could be performed using reweighted score matching \cite{ma2025efficient} using the $k$-step $Q$-function. This motivates applications of this theory to larger scale simulations and baselines.
We may also extend the MDP results in this work to the RL setting where transitions and reward functions are unknown and further investigate the consequences of the $k$-step methods in the fully observable settings (see \cref{all_sims}) where optimization is still challenging such as in robotics applications \cite{schaul2019ray}.

  \section*{Acknowledgments}
 This research was supported by NSF Grants 2339112, 2512805, CMU CyLab Seed Funding, and Pennsylvania Infrastructure Technology Alliance. In addition, Alex DeWeese is supported by Leo Finzi Memorial Fellowship in Electrical \& Computer Engineering, the David H. Barakat and LaVerne Owen-Barakat CIT Dean's Fellowship, and the Fritsch Family Fellowship.

\bibliographystyle{plain}
\bibliography{refs}

\newpage
\appendix

\section{Simulations}
\label{all_sims}

\newcommand{\psub}{\pi_{crit}}
\newcommand{\popt}{\pi_{det}^*}

In this section, we provide additional simulations on examples with suboptimal critical points. The two state example from \cref{running_example} is an example of state-aggregation and here we will give examples of the other applications from \cref{applications}. Our first two simulations ``Number Matching with Independent Agents'' and ``Button Press with Decentralized Agents'' are examples of independent multi-agents and decentralized multi-agents respectively (in the two agent case, decentralized and group decentralized agents are identical).

Then, we will show the surprising effectiveness of these methods in the fully observable setting.
Naively, it seems that by the gradient dominance bound shown in \cite{agarwal2019reinforcement}, that there should not be any suboptimal critical points when the setting is fully observable. However, this bound depends on the factor $||d_\mu^{\pi^*}/d_\mu^{ \pi}||_{\infty}$ or $||d_\mu^{\pi^*}/\mu||_{\infty}$ which can easily be infinite when the support of $\mu$ or $d_\mu^{\pi}$ is off even by a single state from $d_\mu^{\pi^*}$ such as in the examples we will provide. This invalidates the bound and easily creates suboptimal critical points as we will demonstrate. These critical points can then be escaped using our k-step methods as predicted by \cref{crit_guarantee}.

\subsection{Number Matching with Independent Agents}
\label{num_match}

\begin{figure}[h]
\centering
\begin{center}
\includegraphics[width=1.00\textwidth]{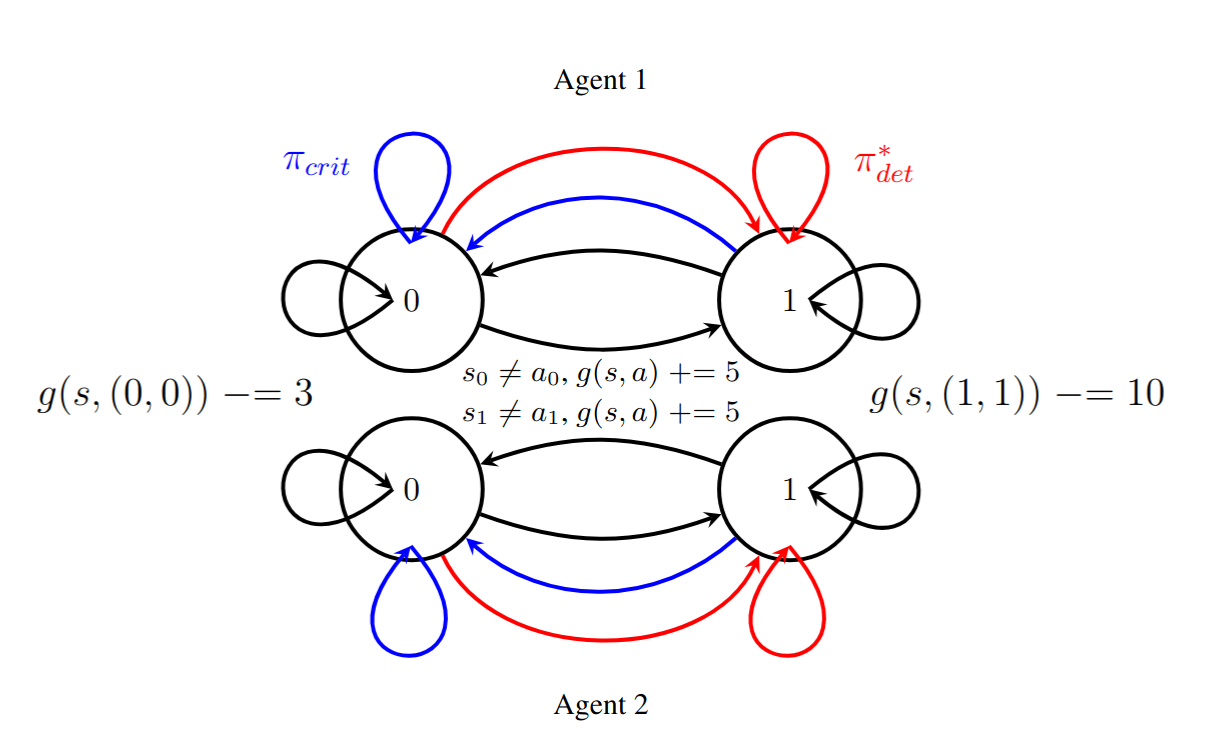}

\end{center}
\caption{Each agent independently transitions between states $0$ and $1$ and are rewarded with cost reductions when the numbers match. Switching their numbers incur penalties. The optimal policy in red $\popt$ is to match the number 1. The suboptimal critical point $\psub$ indicated in blue matches the number zero.}
\label{fig:exp1}
\end{figure}

In this simulation, two agents $\mathcal N = \{0,1\}$ will either be in a state of $0$ or $1$ with a joint state space of $\mathcal S = \{(s_1, s_2)\} = \{(0,0), (0,1), (1,0), (1,1)\}$. Each agent can take an action of either $0$ or $1$ which will determine which state they will transition to next deterministically with a joint action space of $\mathcal A = \{(a_1, a_2)\} = \{(0,0), (0,1), (1,0), (1,1)\}$ (e.g. taking action $(1,1)$ will transition the agents to state $(1,1)$). When the agents take the joint action $(0,0)$ they obtain a cost reduction of $-3$ and for joint action $(1,1)$ will obtain a higher cost reduction of $-10$. However, there will be a penalty of $+5$ for each agent that has $s_i \neq a_i$ (the agent chooses to switch its state). We will use a discount factor of $\gamma = 0.9$ and initial state distribution $\mu = [\mu(0,0), \mu(0,1), \mu(1,0), \mu(1,1)] = [0.05, 0.37, 0.37, 0.21]$.

Notice that in the fully observable case, since the support of the initial state distribution $\mu$ is over all states, we would be theoretically guaranteed to not have suboptimal critical points with trivial parametrization due to the gradient dominance condition in \cite{agarwal2019reinforcement}. However, as we will see, a suboptimal critical point will be created by considering a partial observability structure.

In terms of the policy restriction, agents will be independent so they must take an action that only depends on their own state. In otherwords, $\Pi_{det} = \{(\pi_{det}^1, \pi_{det}^2)\}$ where $\pi_{det}^i: s_i \rightarrow a_i$. $\tilde\Pi_{res}$ will then contain a distribution over $\Pi_{det}$ and we will assume a trivial parameterization which holds a weight for each element of $\Pi_{det}$.

We define the four individual deterministic policies available to each agent: $\pi_{\mathtt{a0}}$ always takes action $0$, $\pi_{\mathtt{a1}}$ always takes action $1$, $\pi_{\mathtt{st}}(s_i) = s_i$ stays in the current state, and $\pi_{\mathtt{fl}}(s_i) = 1 - s_i$ flips to the opposite state. There are $4 \times 4 = 16$ joint deterministic policies in $\Pi_{det}$.

The optimal policy $\popt = (\pi_{\mathtt{a1}}, \pi_{\mathtt{a1}})$ has both agents deterministically move to state $1$, yielding $J^{\popt, 1}(\mu) = \mu \cdot [-90, -95, -95, -100]^T = -95.8$. We claim that $\psub = (\pi_{\mathtt{a0}}, \pi_{\mathtt{a0}})$, which has both agents always move to state $0$, is a suboptimal $1$-step critical point with $J^{\psub, 1}(\mu) = \mu^T [-30, -25, -25, -20] = -24.2$.

To show $\psub$ is a $1$-step critical point we compute the $1$-step advantages $A^{\psub, 1}(s, \pi') = Q^{\psub, 1}(s, \pi') - J^{\psub}(s)$ for all $\pi' \in \Pi_{det}$ at each joint state $s$, using the discounted occupancy $d_\mu^{\psub} = [d_{\mu}^{\psub}(0,0), d_{\mu}^{\psub}(0,1), d_{\mu}^{\psub}(1,0), d_{\mu}^{\psub}(1,1)] = [0.905, 0.037, 0.037, 0.021]$. For the purposes of the table, we abbreviate $A^{\psub, 1}((s_1,s_2), \pi')$ as $A^1(s_1,s_2)$ and write $\bar{A}^1 = d_\mu^{\psub}\cdot [A^1(0,0), A^1(0,1), A^1(1,0), A^1(1,1)]^T$ for the weighted average.

\begin{center}
\footnotesize
\begin{tabular}{l rrrr r}
\toprule
Policy $\pi'$ & $A^1(0,0)$ & $A^1(0,1)$ & $A^1(1,0)$ & $A^1(1,1)$ & $\bar{A}^1$ \\
\midrule
$(\pi_{\mathtt{a0}},\pi_{\mathtt{a0}})$ [$\psub$] & $0.000$ & $0.000$ & $0.000$ & $0.000$ & $+0.0000$ \\
$(\pi_{\mathtt{a0}},\pi_{\mathtt{a1}})$ & $+12.500$ & $+2.500$ & $+12.500$ & $+2.500$ & $+11.9200$ \\
$(\pi_{\mathtt{a0}},\pi_{\mathtt{fl}})$ & $+12.500$ & $0.000$ & $+12.500$ & $0.000$ & $+11.7750$ \\
$(\pi_{\mathtt{a0}},\pi_{\mathtt{st}})$ & $0.000$ & $+2.500$ & $0.000$ & $+2.500$ & $+0.1450$ \\
$(\pi_{\mathtt{a1}},\pi_{\mathtt{a0}})$ & $+12.500$ & $+12.500$ & $+2.500$ & $+2.500$ & $+11.9200$ \\
$(\pi_{\mathtt{a1}},\pi_{\mathtt{a1}})$ [$\popt$] & $+12.000$ & $+2.000$ & $+2.000$ & $-8.000$ & $+10.8400$ \\
$(\pi_{\mathtt{a1}},\pi_{\mathtt{fl}})$ & $+12.000$ & $+12.500$ & $+2.000$ & $+2.500$ & $+11.4490$ \\
$(\pi_{\mathtt{a1}},\pi_{\mathtt{st}})$ & $+12.500$ & $+2.000$ & $+2.500$ & $-8.000$ & $+11.3110$ \\
$(\pi_{\mathtt{fl}},\pi_{\mathtt{a0}})$ & $+12.500$ & $+12.500$ & $0.000$ & $0.000$ & $+11.7750$ \\
$(\pi_{\mathtt{fl}},\pi_{\mathtt{a1}})$ & $+12.000$ & $+2.000$ & $+12.500$ & $+2.500$ & $+11.4490$ \\
$(\pi_{\mathtt{fl}},\pi_{\mathtt{fl}})$ & $+12.000$ & $+12.500$ & $+12.500$ & $0.000$ & $+11.7850$ \\
$(\pi_{\mathtt{fl}},\pi_{\mathtt{st}})$ & $+12.500$ & $+2.000$ & $0.000$ & $+2.500$ & $+11.4390$ \\
$(\pi_{\mathtt{st}},\pi_{\mathtt{a0}})$ & $0.000$ & $0.000$ & $+2.500$ & $+2.500$ & $+0.1450$ \\
$(\pi_{\mathtt{st}},\pi_{\mathtt{a1}})$ & $+12.500$ & $+2.500$ & $+2.000$ & $-8.000$ & $+11.3110$ \\
$(\pi_{\mathtt{st}},\pi_{\mathtt{fl}})$ & $+12.500$ & $0.000$ & $+2.000$ & $+2.500$ & $+11.4390$ \\
$(\pi_{\mathtt{st}},\pi_{\mathtt{st}})$  & $0.000$ & $+2.500$ & $+2.500$ & $-8.000$ & $+0.0170$ \\
\bottomrule
\end{tabular}
\end{center}

All weighted $1$-step advantages $\bar{A}^1$ are non-negative and indicating the one step costs are worse. By \cref{kpg_representation}, with the trivial parameterization the $1$-step policy gradient with respect to each $\pi' \in \Pi_{det}$ is proportional to $\bar{A}^1$, so every gradient direction is non-improving. Thus $\psub$ is a suboptimal $1$-step critical point.

Computing the $k$-step advantages of $\popt = (\pi_{\mathtt{a1}}, \pi_{\mathtt{a1}})$:

\begin{center}
\begin{tabular}{r rrrr r}
\toprule
$k$ & $A^k(0,0)$ & $A^k(0,1)$ & $A^k(1,0)$ & $A^k(1,1)$ & $\bar{A}^k$ \\
\midrule
$1$ & $+12.0000$ & $+2.0000$ & $+2.0000$ & $-8.0000$ & $+10.8400$ \\
$2$ & $+4.8000$ & $-5.2000$ & $-5.2000$ & $-15.2000$ & $+3.6400$ \\
$\mathbf{3}$ & $\mathbf{-1.6800}$ & $\mathbf{-11.6800}$ & $\mathbf{-11.6800}$ & $\mathbf{-21.6800}$ & $\mathbf{-2.8400}$ \\
$4$ & $-7.5120$ & $-17.5120$ & $-17.5120$ & $-27.5120$ & $-8.6720$ \\
$5$ & $-12.7608$ & $-22.7608$ & $-22.7608$ & $-32.7608$ & $-13.9208$ \\
$10$ & $-32.1057$ & $-42.1057$ & $-42.1057$ & $-52.1057$ & $-33.2657$ \\
$25$ & $-54.2568$ & $-64.2568$ & $-64.2568$ & $-74.2568$ & $-55.4168$ \\
\bottomrule
\end{tabular}
\end{center}

At $k = 3$ the weighted average $\bar{A}^k$ first becomes negative, so $k_{\mathrm{esc}} = 3$. A $k$-step policy gradient with $k \geq 3$ will descend toward $\popt$, whereas $k \in \{1, 2\}$ remains stuck at $\psub$. The penalty for switching state dominates the $1$-step and $2$-step horizons; by $k = 3$ the repeated lower cost at state $(1,1)$ outweighs the switching cost.

\subsection{Button Press with Decentralized Agents}
\label{button}

\begin{figure}[h]
\centering

\includegraphics[width=0.80\textwidth]{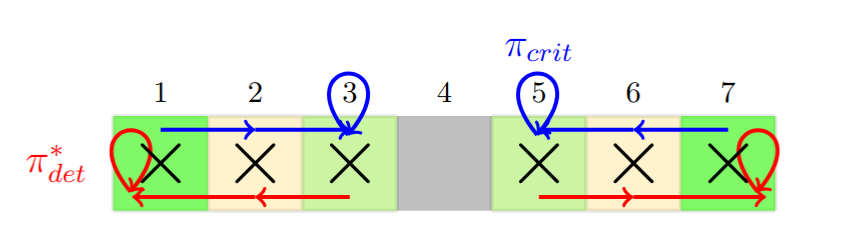}
\caption{A left agent in $\{1,2,3\}$ and a right agent in $\{5,6,7\}$ are separated by a wall at position $4$. When at positions $(3,5)$ staying at that position interpreted as jointly ``pushing a button'' will reduce the cost of agents ($-5$) with a penalty if either decide to leave ($+30$). Similarly there are a set of ``buttons'' at $(1,7)$ but with a better cost reduction $-18$. Indicated in red is the optimal deterministic policy $\popt$ to bring the agents towards the higher reward button states $(1,7)$. There is a suboptimal deterministic policy $\psub$ which brings both agents to the center button state $(3,5)$ and is a critical point. Crosses mark the support of the initial state distribution.}
\label{fig:exp5}
\end{figure}
This experiment illustrates that a decentralized policy restriction alone can create suboptimal critical points even when the initial distribution $\mu$ has full support. In the fully observable setting, full-support $\mu$ together with the gradient dominance bound of \cref{crit_guarantee} would guarantee that there are no suboptimal critical points. Here, the decentralized constraint is the source of the critical point.

A left agent occupies $s_L \in \{1, 2, 3\}$ and a right agent occupies $s_R \in \{5, 6, 7\}$, separated by a wall at position $4$ so the agents cannot cross. Each agent can move to an adjacent position or stay and the boundary positions are clamped. The immediate cost depends only on the joint state with $g(3,5) = -5$ (center button, if staying), $g(1,7) = -18$ (high-reward button, if staying), leaving either button incurs a positive cost ($-5 + 30 = +25$ or $-18 + 30 = +12$ respectively), and all other states have cost $0$. We use $\gamma = 0.9$ and $\mu$ uniform over all $9$ joint states.

Agents are decentralized with visibility $\mathcal V = 2$: at every joint state except $(3,5)$, neither agent can observe the other, so each agent's action may only depend on its own position. At the sole mutually visible state $(3,5)$ ($|3-5|=2 = \mathcal V$), both agents observe the full joint state and may independently choose their action there. The left agent's policy therefore has four independent components: $\pi^L(1) \in \{1,2\}$ and $\pi^L(2) \in \{1,2,3\}$ applied at all out-of-view states with those positions, $\pi^L(3) \in \{2,3\}$ applied only at the out-of-view states $(3,6)$ and $(3,7)$, and a separate visible-state action $\pi^L_{\mathrm{vis}} \in \{2,3\}$ applied at $(3,5)$ independently. This gives $2 \times 3 \times 2 \times 2 = 24$ left-agent policies and by symmetry the right agent also has $24$. There are $24 \times 24 = 576$ joint deterministic policies in $\Pi_{det}$.

The optimal policy $\popt$ sends each agent toward its end of the grid and stays at the high-reward button: $J^{\popt}(\mu) = -152.22$. We claim that $\psub$, which sends the left agent toward $3$ and the right agent toward $5$ and stays at the center button, is a suboptimal $1$-step critical point with $J^{\psub}(\mu) = -41.72$.

Under $\psub$, occupancy concentrates at $(3, 5)$. The full discounted occupancy is
  $d_\mu^{\psub}(3,5) = 0.861,
  d_\mu^{\psub}(2,5) = d_\mu^{\psub}(3,6) =  0.031,
  d_\mu^{\psub}(2,6) =  0.021,$
and $d_\mu^{\psub}(s) = 0.011$ for $s \in \{(1,5),(1,6),(1,7),(2,7),(3,7)\}$.

All $576$ joint deterministic policies $\pi' \in \Pi_{det}$ satisfy $d_\mu^{\psub} \cdot \bar{A}^{\psub,1}(\cdot, \pi')^T \geq 0$ (verified numerically, omitted for brevity).

Below, we provide the $k$-step advantages of $\popt$:

\begin{center}
\resizebox{\linewidth}{!}{
\begin{tabular}{r rrr rrr rrr r}
\toprule
$k$
  & $A^k(1,5)$ & $A^k(1,6)$ & $A^k(1,7)$
  & $A^k(2,5)$ & $A^k(2,6)$ & $A^k(2,7)$
  & $A^k(3,5)$ & $A^k(3,6)$ & $A^k(3,7)$
  & $\bar{A}^k$ \\
\midrule
$1$
  & $+4.050$ & $+14.850$ & $-15.150$
  & $+8.550$ & $+19.350$ & $+14.850$
  & $+34.500$ & $+8.550$ & $+4.050$
  & $+30.9005$ \\
$2$
  & $+17.415$ & $+1.215$ & $-28.785$
  & $+21.915$ & $+5.715$ & $+1.215$
  & $+51.915$ & $+21.915$ & $+17.415$
  & $+46.2830$ \\
$3$
  & $+5.143$ & $-11.057$ & $-41.057$
  & $+9.643$ & $-6.557$ & $-11.057$
  & $+39.643$ & $+9.643$ & $+5.143$
  & $+34.0115$ \\
$4$
  & $-5.901$ & $-22.101$ & $-52.101$
  & $-1.401$ & $-17.601$ & $-22.101$
  & $+28.599$ & $-1.401$ & $-5.901$
  & $+22.9672$ \\
$5$
  & $-15.841$ & $-32.041$ & $-62.041$
  & $-11.341$ & $-27.541$ & $-32.041$
  & $+18.659$ & $-11.341$ & $-15.841$
  & $+13.0272$ \\
$6$
  & $-24.787$ & $-40.987$ & $-70.987$
  & $-20.287$ & $-36.487$ & $-40.987$
  & $+9.713$ & $-20.287$ & $-24.787$
  & $+4.0813$ \\
$\mathbf{7}$
  & $\mathbf{-32.838}$ & $\mathbf{-49.038}$ & $\mathbf{-79.038}$
  & $\mathbf{-28.338}$ & $\mathbf{-44.538}$ & $\mathbf{-49.038}$
  & $\mathbf{+1.662}$ & $\mathbf{-28.338}$ & $\mathbf{-32.838}$
  & $\mathbf{-3.9700}$ \\
$8$
  & $-40.084$ & $-56.284$ & $-86.284$
  & $-35.584$ & $-51.784$ & $-56.284$
  & $-5.584$ & $-35.584$ & $-40.084$
  & $-11.2162$ \\
\bottomrule
\end{tabular}}
\end{center}

At $k = 7$ the weighted average $\bar{A}^k$ first becomes negative, so $k_{\mathrm{esc}} = 7$. The center button $(3,5)$ maintains a positive advantage ($A^k(3,5) > 0$) through $k = 6$, pushing the weighted average in the wrong direction until the high-reward button advantage at $(1,7)$ and intermediate states accumulates sufficiently.

\subsection{Fully Observable Moat Cross}
\label{moat}

\begin{figure}[h]
\centering
\includegraphics[width = 0.8\textwidth]{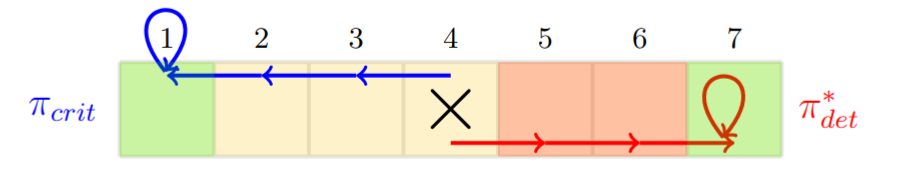}

\caption{A $7$-long grid world where the agent starts at state $4$ ($\mu = \delta_4$). The costs in the diagram are $g(1)=-1$, $g(5)=g(6)=+3$ (moat), $g(7)=-20$. The optimal policy in red $\popt$ will overcome the penalty from the moat and move to the higher reward square at state 7. A suboptimal critical point will be present at $\psub$ which goes directly to state $1$.}
\label{fig:exp3}
\end{figure}

With this experiment, we demonstrate the benefits of our theory in a fully observable setting. Recall that in the unrestricted policy class setting there exists a gradient dominance bound in \cite{agarwal2019reinforcement}. However, this bound depends on the factor $||d_\mu^{\pi^*}/d_\mu^{ \pi}||_{\infty}$ or $||d_\mu^{\pi^*}/\mu||_{\infty}$ which will be infinite in this example (since the support of $\mu$ and $d_{\mu}^{\pi}$ does not subsume the support of $d_{\mu}^{\pi^*}$). Therefore, suboptimal critical points may emerge even when the policy class is unrestricted.

In this single-agent experiment, the state space is a grid world that has $7$ states in a row $\mathcal{S} = \{1, \ldots, 7\}$ with the agent starting at state $4$ ($\mu = \delta_4$). At each step the agent chooses to move left ($a = -1$), stay ($a = 0$), or move right ($a = +1$), and transitions to the adjacent state deterministically (clamped at boundaries). The per-step costs are $g(1) = -1$, $g(5) = g(6) = +3$ (moat), $g(7) = -20$ and all other states have $g = 0$. We use $\gamma = 0.9$.

The suboptimal policy $\psub$ always moves left and remains at state $1$ with $J^{\psub}(\delta_4) = -7.29$. The optimal policy $\popt$ always moves right, crossing the moat (states $5,6$) to stay at state $7$ with $J^{\popt}(\delta_4) = -140.67$.
The discounted occupancy under $\psub$ is
  $d_{\delta_4}^{\psub}(1) = 0.729,\quad
  d_{\delta_4}^{\psub}(2) = 0.081,\quad
  d_{\delta_4}^{\psub}(3) = 0.090,\quad
  d_{\delta_4}^{\psub}(4) = 0.100,\quad
  d_{\delta_4}^{\psub}(s) = 0 \text{ for } s \in \{5,6,7\}$, making the gradient dominance bound in \cite{agarwal2019reinforcement} vacuous since $\popt$ visits states unreachable under $\psub$.

At every support state $s \in \{1,2,3,4\}$, every action $a \in \{-1,0,+1\}$ has $A^{\psub,1}(s,a) \geq 0$, confirming that $\psub$ is a $1$-step critical point:

\begin{center}
\begin{tabular}{lll rr}
\toprule
State $s$ & $J^{\psub}(s)$ & Action $a$ & $Q^{\psub}(s,a)$ & $A^{\psub}(s,a)$ \\
\midrule
\multirow{3}{*}{$s=1$} & \multirow{3}{*}{$-10.00$}
  & $a{=}{-1}$ [$\psub$] & $-10.000$ & $0.000$ \\
  & & $a{=}0$ & $-10.000$ & $0.000$ \\
  & & $a{=}{+1}$ & $-9.100$ & $+0.900$ \\
\midrule
\multirow{3}{*}{$s=2$} & \multirow{3}{*}{$-9.00$}
  & $a{=}{-1}$ [$\psub$] & $-9.000$ & $0.000$ \\
  & & $a{=}0$ & $-8.100$ & $+0.900$ \\
  & & $a{=}{+1}$ & $-7.290$ & $+1.710$ \\
\midrule
\multirow{3}{*}{$s=3$} & \multirow{3}{*}{$-8.10$}
  & $a{=}{-1}$ [$\psub$] & $-8.100$ & $0.000$ \\
  & & $a{=}0$ & $-7.290$ & $+0.810$ \\
  & & $a{=}{+1}$ & $-6.561$ & $+1.539$ \\
\midrule
\multirow{3}{*}{$s=4$} & \multirow{3}{*}{$-7.29$}
  & $a{=}{-1}$ [$\psub$] & $-7.290$ & $0.000$ \\
  & & $a{=}0$ & $-6.561$ & $+0.729$ \\
  & & $a{=}{+1}$ & $-3.205$ & $+4.085$ \\
\bottomrule
\end{tabular}
\end{center}

The positive $A^{\psub}$ for rightward actions reflects the fact that the moat states $5,6$ have low value under $\psub$: transitioning into the moat is immediately costly and the future value $J^{\psub}(5) = -3.56$ is poor. Computing the $k$-step advantages of $\popt$:

\begin{center}
\begin{tabular}{r rrrr r}
\toprule
$k$ & $A^k(1)$ & $A^k(2)$ & $A^k(3)$ & $A^k(4)$ & $\bar{A}^k$ \\
\midrule
$1$  & $+0.900$ & $+1.710$ & $+1.539$ & $+4.085$ & $+1.342$ \\
$2$  & $+2.439$ & $+3.095$ & $+5.216$ & $+9.824$ & $+3.481$ \\
$3$  & $+3.686$ & $+6.404$ & $+10.381$ & $-2.294$ & $+3.910$ \\
$4$  & $+6.664$ & $+11.053$ & $-0.526$ & $-15.403$ & $+4.165$ \\
$5$  & $+10.847$ & $+1.237$ & $-12.324$ & $-27.201$ & $+4.179$ \\
$\mathbf{6}$  & $\mathbf{+2.013}$ & $\mathbf{-9.381}$ & $\mathbf{-22.942}$ & $\mathbf{-37.819}$ & $\mathbf{-5.139}$ \\
$7$  & $-7.543$ & $-18.937$ & $-32.498$ & $-47.375$ & $-14.695$ \\
$10$ & $-30.851$ & $-42.245$ & $-55.805$ & $-70.682$ & $-38.003$ \\
\bottomrule
\end{tabular}
\end{center}

At $k = 6$ the weighted average $\bar{A}^k$ first becomes negative, so $k_{\mathrm{esc}} = 6$. Note that state $4$ escapes as early as $k = 3$ (its per-state advantage turns negative), but the high occupancy mass at state $1$ ($d_{\delta_4}^{\psub}(1) = 0.729$) keeps $\bar{A}^k$ positive through $k = 5$.

\subsection{Fully Observable Two Path Navigation}
\label{two_path}

\begin{figure}[h]
\centering
\includegraphics[width = 0.4\textwidth]{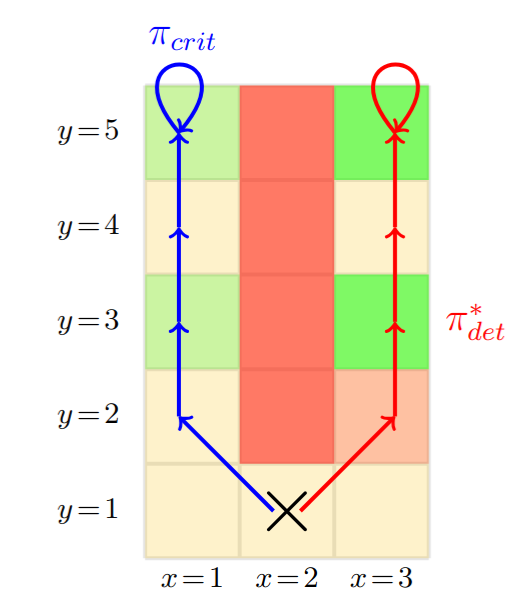}
\caption{A $3\times 5$ grid where the agent starts at $(2,1)$ ($\mu = \delta_{(2,1)}$) and is forced upward each step. The optimal policy indicated in red $\popt$ moves right incurring a minor penalty of $+1$ at $(3,2)$ and then passes through a high cost reduction intermediate state $(3,3)$ of $-15$ and is absorbing at $(3, 5)$ which gives $-20$. The suboptimal policy $\pi_{sub}$ moves left passing through the minor cost reduction of $-2$ at $(1,3)$ and absorbs at a $(1,5)$ which gives a minor cost reduction of $-5$.}
\label{fig:exp4}
\end{figure}

Similar to the previous experiment, we will consider another fully observable example where a suboptimal critical point emerges.

In this single-agent, fully observable experiment the state space is $\mathcal{S} = \{1,2,3\} \times \{1,\ldots,5\}$ with the agent starting at $(2,1)$ ($\mu = \delta_{(2,1)}$). The row coordinate increases by $1$ deterministically each step regardless of the action; the agent's action $a \in \{-1, 0, +1\}$ controls horizontal movement (clipped to columns $\{1,2,3\}$). The per-step cost is $g(2, y) = +10$ for $y \geq 2$ (the crater column), $g(1,3) = -2$, $g(1,5) = -5$, $g(3,3) = -15$, $g(3,5) = -20$ $g(3,2) = +1$, and $g = 0$ elsewhere. Both $(1,5)$ and $(3,5)$ are absorbing. We use $\gamma = 0.9$.

The suboptimal policy $\psub$ always moves left, following $(2,1) \to (1,2) \to \cdots \to (1,5)$, with $J^{\psub}(\delta_{(2,1)}) = -34.43$. The optimal policy $\popt$ always moves right, following $(2,1) \to (3,2) \to \cdots \to (3,5)$, with $J^{\popt}(\delta_{(2,1)}) = -142.47$. The support of $d_{\delta_{(2,1)}}^{\psub}$ is $\{(2,1),(1,2),(1,3),(1,4),(1,5)\}$. Most states on the right path lie entirely outside this support, making the gradient dominance bound vacuous.

The discounted occupancy under $\psub$ is
  $d_{\delta_{(2,1)}}^{\psub}(2,1) = 0.100,
  d_{\delta_{(2,1)}}^{\psub}(1,2) = 0.090,
  d_{\delta_{(2,1)}}^{\psub}(1,3) = 0.081,
  d_{\delta_{(2,1)}}^{\psub}(1,4) = 0.073,
  d_{\delta_{(2,1)}}^{\psub}(1,5) = 0.656$
with $d_{\delta_{(2,1)}}^{\psub}(s) = 0$ elsewhere.

At every support state, all three actions have $A^{\psub,1}(s,a) \geq 0$. The right-moving actions enter the crater (or hit the $+1$ penalty before moving into the crater), and the left-moving or stay actions either are already following $\psub$ or enters the crater. The full table of advantages across the support of $d_{\delta_{(2,1)}}^{\psub}$ is shown below.

\begin{center}
\begin{tabular}{lll rr}
\toprule
State $s$ & $J^{\psub}(s)$ & Action $a$ & $Q^{\psub}(s,a)$ & $A^{\psub}(s,a)$ \\
\midrule
\multirow{3}{*}{$(2,1)$} & \multirow{3}{*}{$-34.425$}
  & $a{=}{-1}$ [$\psub$] & $-34.425$ & $0.000$ \\
  & & $a{=}0$ & $-25.425$ & $+9.000$ \\
  & & $a{=}{+1}$ [$\popt$] & $-23.805$ & $+10.620$ \\
\midrule
\multirow{3}{*}{$(1,2)$} & \multirow{3}{*}{$-38.250$}
  & $a{=}{-1}$ [$\psub$] & $-38.250$ & $0.000$ \\
  & & $a{=}0$ & $-38.250$ & $0.000$ \\
  & & $a{=}{+1}$ & $-27.450$ & $+10.800$ \\
\midrule
\multirow{3}{*}{$(1,3)$} & \multirow{3}{*}{$-42.500$}
  & $a{=}{-1}$ [$\psub$] & $-42.500$ & $0.000$ \\
  & & $a{=}0$ & $-42.500$ & $0.000$ \\
  & & $a{=}{+1}$ & $-33.500$ & $+9.000$ \\
\midrule
\multirow{3}{*}{$(1,4)$} & \multirow{3}{*}{$-45.000$}
  & $a{=}{-1}$ [$\psub$] & $-45.000$ & $0.000$ \\
  & & $a{=}0$ & $-45.000$ & $0.000$ \\
  & & $a{=}{+1}$ & $-31.500$ & $+13.500$ \\
\midrule
\multirow{3}{*}{$(1,5)$} & \multirow{3}{*}{$-50.000$}
  & $a{=}{-1}$ [$\psub$] & $-50.000$ & $0.000$ \\
  & & $a{=}0$ & $-50.000$ & $0.000$ \\
  & & $a{=}{+1}$ & $-36.500$ & $+13.500$ \\
\bottomrule
\end{tabular}
\end{center}

Computing the $k$-step advantages of $\popt$ (all actions set to right, $a = +1$):

\begin{center}
\begin{tabular}{r rrrrr r}
\toprule
$k$ & $A^k(1,2)$ & $A^k(1,3)$ & $A^k(1,4)$ & $A^k(1,5)$ & $A^k(2,1)$ & $\bar{A}^k$ \\
\midrule
$1$ & $+10.800$ & $+9.000$ & $+13.500$ & $+13.500$ & $+10.620$ & $+12.605$ \\
$2$ & $+21.735$ & $+7.785$ & $+12.285$ & $+12.285$ & $-2.340$ & $+11.309$ \\
$3$ & $+9.706$ & $-4.244$ & $+0.256$ & $+0.256$ & $+0.212$ & $+0.738$ \\
$\mathbf{4}$ & $\mathbf{-1.119}$ & $\mathbf{-15.069}$ & $\mathbf{-10.569}$ & $\mathbf{-10.569}$ & $\mathbf{-10.614}$ & $\mathbf{-10.088}$ \\
$5$ & $-10.862$ & $-24.812$ & $-20.312$ & $-20.312$ & $-20.357$ & $-19.831$ \\
$10$ & $-46.771$ & $-60.721$ & $-56.221$ & $-56.221$ & $-56.266$ & $-55.740$ \\
\bottomrule
\end{tabular}
\end{center}

At $k = 4$ the weighted average $\bar{A}^k$ first becomes negative, so $k_{\mathrm{esc}} = 4$. Four steps of lookahead are required to see past the crater penalty and notice that $(3,5)$ offers substantially lower long-run cost than $(1,5)$.

\newpage

\section{Proofs}
\label{proofs_section}
\subsection{$k$-step Policy Gradient Theorem}
\label{kpg_proof}
Below we prove the $k$-step Policy Gradient theorem in \cref{kpg_representation}. The proof mirrors the standard policy gradient theorem proof from \cite{agarwal2019reinforcement}.

\textbf{Proof for \cref{kpg_representation}}
\begin{proof}
\begin{align}
    &\nabla_\alpha J^{\tilde\pi_\alpha,k}(s_0)\label{k_pg_0}\\
    & = \nabla_\alpha \mathbb E_{\pi_{det} \sim \tilde\pi_\alpha}[Q^{\tilde\pi_\alpha, k} (s_0, \pi_{det})]\\
    &= \nabla_\alpha \sum_{\pi_{det}} \tilde\pi_\alpha(\pi_{det}) Q^{\tilde\pi_\alpha, k}(s_0, \pi_{det})\\
    &= \sum_{\pi_{det}}  (\nabla_\alpha\tilde\pi_\alpha(\pi_{det})) Q^{\tilde\pi_\alpha, k}(s_0, \pi_{det})+ \sum_{\pi_{det}} \tilde\pi_\alpha(\pi_{det})\nabla_\alpha  Q^{\tilde\pi_\alpha, k}(s_0, \pi_{det})\label{k_pg_1}\\
    &= \sum_{\pi_{det}}  (\tilde\pi_\alpha(\pi_{det})\nabla_\alpha \log\tilde\pi_\alpha(\pi_{det})) Q^{\tilde\pi_\alpha, k}(s_0, \pi_{det})+ \gamma^k\sum_{\pi_{det}} \tilde\pi_\alpha(\pi_{det}) \mathbb E_{\tau^k \sim \pi_{det}\lvert_{s_0}}[\nabla_\alpha J^{\tilde\pi_\alpha, k}(s_k)]\label{k_pg_2}\\
    &= \mathbb E_{\pi_{det} \sim \tilde\pi_\alpha}[ Q^{\tilde\pi_\alpha, k}(s_0, \pi_{det})\nabla_\alpha \log\tilde\pi_\alpha(\pi_{det})]+ \gamma^k\sum_{\pi_{det}} \tilde\pi_\alpha(\pi_{det}) \mathbb E_{\tau^k \sim \pi_{det}\lvert_{s_0}}[\nabla_\alpha J^{\tilde\pi_\alpha, k}(s_k)]\label{k_pg_2_5}\\
    &= \ldots\\
    &=\sum_{m = 0}^\infty \gamma^{mk}\sum_{s\in \mathcal S} P(s_{mk} = s\lvert s_0, \tilde\pi_\alpha,k)\sum_{\pi_{det}}\tilde\pi_\alpha(\pi_{det}) Q^{\tilde\pi_\alpha, k}(s_{mk}, \pi_{det})\nabla_\alpha \log\tilde\pi_\alpha(\pi_{det})\label{k_pg_3}\\
    & = \frac{1}{1 - \gamma^{k}} \mathbb E_{s \sim d_{s_0}^{\tilde\pi_\alpha, k}}\mathbb E_{\pi_{det}\sim \tilde\pi_\alpha}[ Q^{\tilde\pi_\alpha, k}(s, \pi_{det})\nabla_\alpha \log\tilde\pi_\alpha(\pi_{det})]\label{k_pg_4}
\end{align}
Chain rule is used in  \cref{k_pg_1}. For   \cref{k_pg_2}, $\nabla_\alpha  Q^{\tilde\pi_\alpha, k}(s_0, \pi_{det}) = \gamma^k E_{\tau^k \sim \pi_{det}\lvert_{s_0}}[\nabla_\alpha J^{\tilde\pi_\alpha, k}(s_k)]$ because the first $k$ timesteps are not dependent on $\alpha$. In \cref{k_pg_2_5}, $\tau^k\sim \pi_{det}\lvert_{s_0}$ refers to the trajectory generated by $\pi_{det}$ starting at $s_0$ up to timestep $k$. \cref{k_pg_3} is obtained through recursive substitution of \cref{k_pg_2_5}. Lastly, \cref{k_pg_4} is by the definition our $k$-step discounted state occupancy measure $d_{s_0}^{\tilde\pi_\alpha, k}(s) = (1 - \gamma^k)\sum_{m = 0}^\infty \gamma^{mk} P(s_{mk} = s\lvert s_0, \tilde\pi_\alpha, k)$.

Taking expectation with respect to $s_0 \sim \mu$ gives \cref{kpg_representation} as desired.
\end{proof}

\subsection{Main Results}
Recall from \cref{proof_sketch} that the proofs of the main results will be split into two parts. In part 1), we will show an upper bound for the directional derivative which will prove \cref{crit_guarantee} and will be used in part 2). We will then prove in Part 2), \cref{projected_gd_bound} and \cref{mirror_bound}.
\subsubsection{Part 1) Approximate Gradient Dominance}
\label{part1_proof}
Before proving \cref{crit_guarantee}, we will show the following bound on the directional derivative
\begin{lemma}[Approximate Gradient Dominance]
 $$\nabla_{\tilde\pi' - \tilde\pi} J^{\tilde\pi,k}(\mu) \leq  \frac{1}{1 - \gamma^k}   (J^{\tilde\pi',k}(\mu) -  J^{\tilde\pi,k}(\mu)) + \frac{6\gamma^k}{(1 - \gamma^k)(1 - \gamma)} g_{max}$$
 \label[lemma]{closed_form}
\end{lemma}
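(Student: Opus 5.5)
The plan is to compute the directional derivative exactly under the trivial parametrization, turn it into a telescoping sum over the $k$-step blocks, and then control the one mismatch (the tail of the $k$-step $Q$-function) by crude $O(\gamma^k g_{max})$ bounds.

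\textbf{Step 1: express the directional derivative through the $k$-step advantage.} I will work with the trivial parametrization, where $\tilde\pi$ is a vector in $\Delta(\Pi_{det})$. In \cref{kpg_representation}, the term $\tilde\pi(\pi_{det})\nabla\log\tilde\pi(\pi_{det})$ then reduces to the coordinate indicator $e_{\pi_{det}}$. Hence the partial derivative in coordinate $\pi_{det}$ is
$$\frac{1}{1-\gamma^k}\mathbb E_{s\sim d_\mu^{\tilde\pi,k}}\big[Q^{\tilde\pi,k}(s,\pi_{det})\big].$$
Taking the inner product with $\tilde\pi'-\tilde\pi$, and using $\mathbb E_{\pi_{det}\sim\tilde\pi}[Q^{\tilde\pi,k}(s,\pi_{det})]=J^{\tilde\pi,k}(s)$, gives
$$\nabla_{\tilde\pi'-\tilde\pi}J^{\tilde\pi,k}(\mu)=\sum_{m\ge 0}\gamma^{mk}\,\mathbb E_{s\sim P_m}\Big[\mathbb E_{\pi_{det}\sim\tilde\pi'}Q^{\tilde\pi,k}(s,\pi_{det})-J^{\tilde\pi,k}(s)\Big].$$
Here $P_m$ is the law of $s_{mk}$ under the $k$-step rollout of $\tilde\pi$ from $\mu$. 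The factor $1/(1-\gamma^k)$ cancels against the normalization in $d_\mu^{\tilde\pi,k}$.

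\textbf{Step 2: telescope the $J^{\tilde\pi,k}$ terms.} I will use the block Bellman identity
$$J^{\tilde\pi,k}(s)=c_k^{\tilde\pi}(s)+\gamma^k\,\mathbb E[J^{\tilde\pi,k}(s_k)],$$
where $c_k^{\tilde\pi}(s)$ is the expected discounted cost of the first $k$ steps and $s_k$ is distributed according to $P_{m+1}$ when $s\sim P_m$. Using it only to rewrite $\sum_m\gamma^{mk}\mathbb E_{P_m}J^{\tilde\pi,k}$ is awkward. I will instead keep the cleaner identity
$$\sum_m\gamma^{mk}\mathbb E_{P_m}\big[J^{\tilde\pi,k}(s)-\gamma^k\mathbb E J^{\tilde\pi,k}(s_k)\big]=J^{\tilde\pi,k}(\mu),$$
together with $\mathbb E_{\tilde\pi'}Q^{\tilde\pi,k}(s,\cdot)=c_k^{\tilde\pi'}(s)+\gamma^k\mathbb E_{\tilde\pi'}J^{\tilde\pi,k}(s_k)$.

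\textbf{Step 3: compare the $k$-step $Q$-function with $J^{\tilde\pi',k}$ (main obstacle).} The key comparison is
$$\mathbb E_{\tilde\pi'}Q^{\tilde\pi,k}(s,\cdot)\le J^{\tilde\pi',k}(s)+\frac{2\gamma^k}{1-\gamma}g_{max}.$$
This holds because both quantities share the first $k$ steps, driven by the same draw from $\tilde\pi'$, and differ only in a tail that is discounted by $\gamma^k$ and bounded by $g_{max}/(1-\gamma)$ in absolute value. Substituting this gives
$$\nabla_{\tilde\pi'-\tilde\pi}J^{\tilde\pi,k}(\mu)\le\sum_m\gamma^{mk}\mathbb E_{P_m}\big[J^{\tilde\pi',k}(s)\big]-J^{\tilde\pi,k}(\mu)+\frac{2\gamma^k g_{max}}{(1-\gamma^k)(1-\gamma)}.$$
The $m=0$ summand is exactly $J^{\tilde\pi',k}(\mu)$. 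The $m\ge1$ summands are evaluated on the wrong state distribution, and since we have no control over $P_m$ (this is what avoids distribution-mismatch factors), I will simply bound them in absolute value by $\frac{\gamma^k}{1-\gamma^k}\cdot\frac{g_{max}}{1-\gamma}$. This yields
$$\nabla_{\tilde\pi'-\tilde\pi}J^{\tilde\pi,k}(\mu)\le J^{\tilde\pi',k}(\mu)-J^{\tilde\pi,k}(\mu)+\frac{3\gamma^k g_{max}}{(1-\gamma^k)(1-\gamma)}.$$

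\textbf{Step 4: restore the stated constant.} The lemma has the factor $\frac{1}{1-\gamma^k}$ in front of the value difference. I will write
$$(J'-J)=\frac{1}{1-\gamma^k}(J'-J)-\frac{\gamma^k}{1-\gamma^k}(J'-J)$$
and bound $|J'-J|\le 2g_{max}/(1-\gamma)$. This contributes at most $\frac{2\gamma^k g_{max}}{(1-\gamma^k)(1-\gamma)}$, for a total constant of $5\le 6$, which gives the claim.

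The step I expect to need the most care is Step 3: checking that the two rollouts genuinely coincide on the first block, so that the tail comparison is valid, and keeping the signs straight in the telescoping. If the $m\ge1$ terms turn out to be more delicate than this crude bound suggests, the slack between the constant $5$ obtained here and the stated $6$ leaves room to absorb them.
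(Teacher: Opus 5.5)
\textbf{Step 3 drops a term.} Your overall route matches the paper's. You use the same exact expression $\frac{1}{1-\gamma^k}\mathbb E_{s\sim d_\mu^{\tilde\pi,k}}[Q^{\tilde\pi,k}(s,\tilde\pi')-J^{\tilde\pi,k}(s)]$ and the same pointwise comparison of $Q^{\tilde\pi,k}(s,\tilde\pi')$ with $J^{\tilde\pi',k}(s)$ at cost $\frac{2\gamma^k g_{max}}{1-\gamma}$. You also make the same $\gamma^k$-cost change of measure from $d_\mu^{\tilde\pi,k}$ to $\mu$; you unpack it as the $m\ge 1$ tail plus your Step 4 rescaling instead of citing $\sum_s|d_\mu^{\tilde\pi,k}(s)-\mu(s)|\le 2\gamma^k$.

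The problem is the displayed inequality in Step 3. After substituting the comparison, the $J^{\tilde\pi,k}$ part is $\sum_{m\ge 0}\gamma^{mk}\mathbb E_{P_m}[J^{\tilde\pi,k}(s)]=\frac{1}{1-\gamma^k}\mathbb E_{d_\mu^{\tilde\pi,k}}[J^{\tilde\pi,k}]$, not $J^{\tilde\pi,k}(\mu)$. Your Step 2 identity collapses to $J^{\tilde\pi,k}(\mu)$ only when each $J^{\tilde\pi,k}(s)$ is paired with $-\gamma^k\mathbb E J^{\tilde\pi,k}(s_k)$, where $s_k$ comes from a block of $\tilde\pi$. The term $\gamma^k\mathbb E_{\tilde\pi'}J^{\tilde\pi,k}(s_k)$ inside $Q^{\tilde\pi,k}(s,\tilde\pi')$ has its first block driven by $\tilde\pi'$. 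In any case you have already replaced that term by $J^{\tilde\pi',k}(s)$, so nothing is left to telescope against. As a result you have silently dropped $-\sum_{m\ge1}\gamma^{mk}\mathbb E_{P_m}[J^{\tilde\pi,k}(s)]$.

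The correct intermediate bound is
\[
\nabla_{\tilde\pi'-\tilde\pi}J^{\tilde\pi,k}(\mu)\le J^{\tilde\pi',k}(\mu)-J^{\tilde\pi,k}(\mu)+\sum_{m\ge1}\gamma^{mk}\mathbb E_{P_m}\big[J^{\tilde\pi',k}(s)-J^{\tilde\pi,k}(s)\big]+\frac{2\gamma^k g_{max}}{(1-\gamma^k)(1-\gamma)}.
\]
Bound the tail using $|J^{\tilde\pi',k}-J^{\tilde\pi,k}|\le \frac{2g_{max}}{1-\gamma}$; it contributes at most $\frac{2\gamma^k g_{max}}{(1-\gamma^k)(1-\gamma)}$. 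The constant after Step 3 is therefore $4$, not $3$, and your Step 4 (which is correct) brings it to exactly $6$. So the lemma does follow from your argument, but the constant $5$ is not established. The ``slack'' you reserved is exactly what the missing term consumes, and the stated bound is tight for this style of crude accounting.
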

\begin{proof}

Using the definition of the directional derivative, and a $k$-step variant of the performance difference lemma proved in \cref{pd_lemma} we obtain:
\begin{align}
    & \nabla_{\tilde\pi' - \tilde\pi} J^{\tilde\pi,k}(\mu)= \lim_{\theta \rightarrow 0} \frac{J^{\tilde\pi + \theta(\tilde\pi' - \tilde\pi),k}(\mu) - J^{\tilde\pi,k}(\mu)}{\theta} \\
    &= \lim_{\theta \rightarrow 0} \frac{J^{(1 - \theta)\tilde\pi + \theta\tilde\pi',k}(\mu) - J^{\tilde\pi,k}(\mu)}{\theta} \\
    &=  \lim_{\theta \rightarrow 0} \frac{1}{1 - \gamma^k} \frac{\mathbb E_{s \sim d^{(1 - \theta)\tilde\pi + \theta\tilde\pi',k}_\mu} [Q^{\tilde\pi,k}(s, \theta\tilde\pi' + (1 - \theta) \tilde\pi) - J^{\tilde\pi,k}(s)]}{\theta}\label{crit_0}\\
    &=  \lim_{\theta \rightarrow 0} \frac{1}{1 - \gamma^k} \frac{\mathbb E_{s \sim d^{(1 - \theta)\tilde\pi + \theta\tilde\pi',k}_\mu} [\theta Q^{\tilde\pi,k}(s, \tilde\pi') +  (1 - \theta ) Q^{\tilde\pi,k}(s, \tilde\pi) - J^{\tilde\pi,k}(s)]}{\theta}\label{crit_1}\\
    &=  \lim_{\theta \rightarrow 0} \frac{1}{1 - \gamma^k} \frac{\mathbb E_{s \sim d^{(1 - \theta)\tilde\pi + \theta\tilde\pi',k}_\mu} [\theta Q^{\tilde\pi,k}(s, \tilde\pi')   - \theta J^{\tilde\pi,k}(s)]}{\theta}\\
    &=  \lim_{\theta \rightarrow 0} \frac{1}{1 - \gamma^k} \mathbb E_{s \sim d^{(1 - \theta)\tilde\pi + \theta\tilde\pi',k}_\mu} [ Q^{\tilde\pi,k}(s, \tilde\pi')   -  J^{\tilde\pi,k}(s)]\\
    &=  \frac{1}{1 - \gamma^k} \mathbb E_{s \sim d^{\tilde\pi,k}_\mu} [ Q^{\tilde\pi,k}(s, \tilde\pi')   -  J^{\tilde\pi,k}(s)]
    \end{align}
    In \cref{crit_1}, we use an affine property of the $k$-step $Q$-function in the correlated policy argument.  This can be shown by viewing $\theta\tilde\pi' + (1 - \theta) \tilde\pi$ as a policy where $\tilde\pi'$ or $\tilde\pi$ is chosen according to an independent indicator $I_\theta$. Conditioning on whether $\pi$ or $\pi'$ is chosen as follows $\mathbb E_{\pi_{det}\sim \theta\tilde\pi' + (1 - \theta) \tilde\pi}[Q^{\tilde\pi,k}(s, \pi_{det})] = \mathbb E_{I_\theta}[I_\theta \mathbb E_{\pi_{det}\sim \tilde\pi'}[Q^{\tilde\pi,k}(s, \pi_{det})] + (1 - I_\theta) E_{\pi_{det}\sim \tilde\pi}[ Q^{\tilde\pi,k}(s, \pi_{det})]] = \theta \mathbb E_{\pi_{det}\sim \tilde\pi'}[Q^{\tilde\pi,k}(s, \pi_{det})] +  (1 - \theta )E_{\pi_{det}\sim \tilde\pi}[ Q^{\tilde\pi,k}(s, \pi_{det})]$.

    This hints at a major theoretical reason why we use correlated policies in this work. The direction $\theta\tilde\pi' + (1 - \theta) \tilde\pi$ is still a correlated policy but stochastically interpolating state independent policies of the form $\pi:\mathcal S\rightarrow \Delta(\mathcal A)$ may result in correlations that cannot be expressed with these state independent policies.

    Next, we will use  $\sum_{s\in \mathcal S}\lvert d_\mu^{\tilde\pi,k}(s) - \mu(s) \rvert \leq 2\gamma^k$ proved in \cref{dist_close} to change the expectation to be over $\mu$ instead of $d_\mu^{\tilde \pi,k}$ as follows:
    \begin{align}
    \frac{1}{1 - \gamma^k} &\mathbb E_{s \sim d^{\tilde\pi,k}_\mu} [ Q^{\tilde\pi,k}(s, \tilde\pi')   -  J^{\tilde\pi,k}(s)]\\
    &\leq  \frac{1}{1 - \gamma^k} \mathbb E_{s \sim \mu} [ Q^{\tilde\pi,k}(s, \tilde\pi')   -  J^{\tilde\pi,k}(s)] +  \frac{1}{1 - \gamma^k} \sum_{s \in \mathcal S} \lvert \mu(s) - d_\mu^{\tilde\pi,k}(s)\rvert \lvert Q^{\tilde\pi,k}(s, \tilde\pi')   -  J^{\tilde\pi,k}(s)\rvert\\
    &\leq  \frac{1}{1 - \gamma^k} \mathbb E_{s \sim \mu} [ Q^{\tilde\pi,k}(s, \tilde\pi')   -  J^{\tilde\pi,k}(s)] + \frac{4\gamma^k}{(1 - \gamma^k)(1 - \gamma)} g_{max}\\
    &=  \frac{1}{1 - \gamma^k} \mathbb E_{s \sim \mu} [  J^{\tilde\pi',k}(s) -  J^{\tilde\pi,k}(s) - (J^{\tilde\pi',k}(s) -  Q^{\tilde\pi,k}(s, \tilde\pi'))] + \frac{4\gamma^k}{(1 - \gamma^k)(1 - \gamma)} g_{max}\\
    &=  \frac{1}{1 - \gamma^k} \mathbb E_{s \sim \mu} [  J^{\tilde\pi',k}(s) -  J^{\tilde\pi,k}(s) - \gamma^k\mathbb E_{\tau^k \sim \tilde\pi'\lvert_{s}}[J^{\tilde\pi',k}(s_k) -  J^{\tilde\pi,k}(s_k)]] + \frac{4\gamma^k}{(1 - \gamma^k)(1 - \gamma)} g_{max}\\
    &\leq  \frac{1}{1 - \gamma^k} \mathbb E_{s \sim \mu} [  J^{\tilde\pi',k}(s) -  J^{\tilde\pi,k}(s)] - \gamma^k\mathbb E_{\tau^k \sim \tilde\pi'\lvert_{s}}[\frac{-2}{(1 - \gamma)}g_{max}] + \frac{4\gamma^k}{(1 - \gamma^k)(1 - \gamma)} g_{max}\\
    &\leq  \frac{1}{1 - \gamma^k}   (J^{\tilde\pi',k}(\mu) -  J^{\tilde\pi,k}(\mu)) + \frac{6\gamma^k}{(1 - \gamma^k)(1 - \gamma)} g_{max}
\end{align}
which gives the lemma statement as desired.

\end{proof}

Now to prove \cref{crit_guarantee}, notice for any critical point $\tilde\pi_{crit}$, the directional derivative $\nabla_{\tilde\pi' - \tilde\pi_{crit}} J^{\tilde\pi_{crit},k}(\mu)$ exists for any $\tilde\pi' \in \tilde\Pi_{res}$ by differentiability. Further, the assumption in \cref{assumption} implies the domain $\tilde\Pi_{res}$ is convex and therefore $(1 - \theta) \tilde \pi + \theta \tilde\pi'\in \tilde\Pi_{res}$.
Therefore, a small change of $\tilde\pi_{crit}$ along the direction of $\tilde \pi' - \tilde \pi_{crit}$ will keep it in $\tilde \Pi_{res}$.

So, all zero gradient critical points will have $\nabla_{\tilde\pi' - \tilde\pi_{crit}} J^{\tilde\pi_{crit},k}(\mu) = 0$ and local minima on the boundary will have $\nabla_{\tilde\pi' - \tilde\pi_{crit}} J^{\tilde\pi_{crit},k}(\mu) \geq 0$ for any $\tilde\pi'\in \Pi_{res}$ because otherwise, it would contradict $\pi_{crit}$ having a zero gradient (all directional derivatives zero) or being a local minima (all directional derivatives in a direction staying in $\tilde \Pi_{res}$ are $\geq 0$).

\textbf{Proof for \cref{crit_guarantee}}:
\begin{proof}
Using \cref{closed_form} and the argument above with $\tilde\pi^*= \delta(\pi^*_{det}) \in \Pi_{res}$ in place of $\tilde\pi'$ and $\tilde \pi_{crit}$ in place of $\tilde \pi$, this yields:

$0 \leq \nabla_{\tilde\pi^* - \tilde\pi_{crit}} J^{\tilde\pi_{crit},k}(\mu) \leq \frac{1}{1 - \gamma^k}   (J^{\pi^*_{det}}(\mu) -  J^{\tilde\pi_{crit},k}(\mu)) + \frac{6\gamma^k}{(1 - \gamma^k)(1 - \gamma)} g_{max}$

simplifying to

$  J^{\tilde\pi_{crit},k}(\mu) - J^{\pi^*_{det}}(\mu) \leq \frac{6\gamma^k}{1 - \gamma} g_{max}$

This gives \cref{crit_guarantee} when the the difference between $\mathbb E_{\pi_{det}\sim\tilde \pi_{crit}}[J^{\pi_{det}}(\mu)]$ and $J^{\tilde\pi_{crit},k}(\mu)$ is bounded as follows:
\begin{align}
    &\mathbb E_{\pi_{det}\sim\tilde \pi_{crit}}[J^{\pi_{det}}(\mu)] - J^{\pi_{det}^*}(\mu) = J^{\tilde\pi_{crit},k}(\mu)- J^{\pi_{det}^*}(\mu) + (\mathbb E_{\pi_{det}\sim\tilde \pi_{crit}}[J^{\pi_{det}}(\mu)] - J^{\tilde\pi_{crit},k}(\mu))\\
    &\leq J^{\tilde\pi_{crit},k}(\mu) -  J^{\pi^*_{det}}(\mu)
    + \frac{2\gamma^k}{1 - \gamma} g_{max}\label{crit_10}\\
    &\leq \frac{8\gamma^k}{1 - \gamma} g_{max}.
\end{align}
Here, \cref{crit_10} uses that both terms $\mathbb E_{\pi_{det}\sim\tilde \pi_{crit}}[J^{\pi_{det}}(\mu)]$ and $J^{\tilde\pi_{crit},k}(\mu)$ have the same distribution of states and actions in the first $k$ iterations.

\end{proof}

\subsubsection{Part 2.1) Projected Gradient Descent}
\label{part2_proj}
\textbf{Proof for \cref{projected_gd_bound}}:

\begin{proof}

Projected gradient descent is a special case of mirror descent with mirror map $\Phi(x) = \frac{1}{2}\|x\|_2^2$ which is $1$-strongly convex.

The Bregman divergence takes the form $D_{\Phi}(x, y) = \frac{1}{2}\|x\|_2^2 - \frac{1}{2}\|y\|_2^2 - y^T(x - y) = \frac{1}{2}\|x - y\|_2^2$.

Using \cref{mirror_bound}, we obtain
\begin{align}
\mathbb E_{\pi_{det}\sim\tilde \pi_{T}}[J^{\pi_{det}}(\mu)] - J^{\pi_{det}^*}(\mu) &\leq \frac{8\gamma^k}{1 - \gamma} g_{max}  + \frac{1}{T}\bigg(\beta D_{\Phi}(\tilde\pi^*,\tilde\pi_0) \bigg)\\
&\leq \frac{8\gamma^k}{1 - \gamma} g_{max}  + \frac{1}{T}\bigg(\frac{\beta}{2}\|\tilde\pi^* - \tilde \pi_0\|_2^2\bigg)
\end{align}
as desired.
\end{proof}

\subsubsection{Part 2.2) Mirror Descent}
\label{part2_mirror}
\begin{lemma}
Mirror descent with the same conditions as \cref{mirror_bound} converges and satisfies the following bound:

    $$\frac{1}{T}\sum_{t = 0}^{T - 1}-\nabla_{\tilde\pi^* - \tilde\pi_t} J^{\tilde\pi_t,k}(\mu)\leq \frac{1}{T}\bigg(\frac{\beta}{\lambda}D_{\Phi}(\tilde\pi^*,\tilde\pi_0)   \bigg) - \frac{1}{T}\sum_{t = 0}^{T - 1}(J^{\tilde\pi_{t + 1},k}(\mu) - J^{\tilde\pi_t,k}(\mu)) $$

    where $\pi^* = \delta(\pi^*_{det})\in \Pi_{res}$ is the Dirac delta of the optimal deterministic policy.
    \label[lemma]{proj_gd_lemma}

\end{lemma}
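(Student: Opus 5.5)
The bound is a per-step three-point inequality for mirror descent combined with the descent lemma, summed and telescoped; no convexity of $f$ is used. Throughout write $f(\tilde\pi)=J^{\tilde\pi,k}(\mu)$, $g_t=\nabla f(\tilde\pi_t)$, and $\eta=\lambda/\beta$. With this notation, $-\nabla_{\tilde\pi^*-\tilde\pi_t}f(\tilde\pi_t)=\langle g_t,\tilde\pi_t-\tilde\pi^*\rangle$. The goal is the per-step inequality
\[
\langle g_t,\tilde\pi_t-\tilde\pi^*\rangle \le \tfrac{1}{\eta}\big(D_\Phi(\tilde\pi^*,\tilde\pi_t)-D_\Phi(\tilde\pi^*,\tilde\pi_{t+1})\big) - \big(f(\tilde\pi_{t+1})-f(\tilde\pi_t)\big).
\]
Summing over $t=0,\dots,T-1$, telescoping the Bregman terms, and dropping $-D_\Phi(\tilde\pi^*,\tilde\pi_T)\le 0$ then gives the lemma after dividing by $T$.

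\textbf{Three-point step.} The update $\tilde\pi_{t+1}=\arg\min_{x\in\tilde\Pi_{res}}\langle g_t,x\rangle+\frac1\eta D_\Phi(x,\tilde\pi_t)$ is a minimization over a convex set, since $\tilde\Pi_{res}=\Delta(\Pi_{det})$ by \cref{assumption}. Its first-order optimality condition reads $\langle g_t+\frac1\eta(\nabla\Phi(\tilde\pi_{t+1})-\nabla\Phi(\tilde\pi_t)),\,\tilde\pi^*-\tilde\pi_{t+1}\rangle\ge 0$. Applying the Bregman three-point identity to this yields
\[
\langle g_t,\tilde\pi_{t+1}-\tilde\pi^*\rangle\le \tfrac1\eta\big(D_\Phi(\tilde\pi^*,\tilde\pi_t)-D_\Phi(\tilde\pi^*,\tilde\pi_{t+1})-D_\Phi(\tilde\pi_{t+1},\tilde\pi_t)\big).
\]

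\textbf{Smoothness step.} Write $\langle g_t,\tilde\pi_t-\tilde\pi^*\rangle=\langle g_t,\tilde\pi_{t+1}-\tilde\pi^*\rangle+\langle g_t,\tilde\pi_t-\tilde\pi_{t+1}\rangle$. By $\beta$-smoothness,
\[
\langle g_t,\tilde\pi_t-\tilde\pi_{t+1}\rangle\le f(\tilde\pi_t)-f(\tilde\pi_{t+1})+\tfrac\beta2\|\tilde\pi_{t+1}-\tilde\pi_t\|^2 .
\]
By $\lambda$-strong convexity of $\Phi$, $D_\Phi(\tilde\pi_{t+1},\tilde\pi_t)\ge\frac\lambda2\|\tilde\pi_{t+1}-\tilde\pi_t\|^2$. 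With $\eta=\lambda/\beta$ this gives $\frac1\eta D_\Phi(\tilde\pi_{t+1},\tilde\pi_t)\ge\frac\beta2\|\tilde\pi_{t+1}-\tilde\pi_t\|^2$, so the quadratic term cancels against the negative Bregman term from the three-point step. Adding the two bounds yields exactly the per-step inequality, with $1/\eta=\beta/\lambda$ matching the constant in the lemma.

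\textbf{Convergence.} Taking $x=\tilde\pi_t$ in the update objective gives $\langle g_t,\tilde\pi_{t+1}-\tilde\pi_t\rangle\le-\frac1\eta D_\Phi(\tilde\pi_{t+1},\tilde\pi_t)$. Combining this with smoothness gives $f(\tilde\pi_{t+1})\le f(\tilde\pi_t)$, so the values are monotone and bounded below by $-g_{max}/(1-\gamma)$, hence convergent. This is the descent property the paper later uses via \cref{descent_lemma_mirror}.

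\textbf{Main obstacle.} The delicate point is justifying the first-order optimality condition and the three-point identity on the constraint set. This requires $\nabla\Phi$ to exist at the iterates, which holds because $\Phi$ is assumed differentiable. It also requires the minimizer to exist and be unique, which follows from strong convexity and compactness of the simplex.
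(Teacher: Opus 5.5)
Your proof is correct and follows the paper's argument essentially step for step. You split $\langle \nabla J^{\tilde\pi_t,k}(\mu),\tilde\pi_t-\tilde\pi^*\rangle$ at $\tilde\pi_{t+1}$, use $\beta$-smoothness and $\lambda$-strong convexity with $\eta=\lambda/\beta$ to cancel the $-\frac{1}{\eta}D_\Phi(\tilde\pi_{t+1},\tilde\pi_t)$ term from the three-point inequality, then telescope and drop $-D_\Phi(\tilde\pi^*,\tilde\pi_T)$. The only minor deviation is in the convergence claim: comparing the update objective at $\tilde\pi_t$ gives you plain monotonicity $J^{\tilde\pi_{t+1},k}(\mu)\le J^{\tilde\pi_t,k}(\mu)$ rather than the paper's sufficient-decrease bound in \cref{descent_lemma_mirror}, but that is enough for the convergence of values asserted here.
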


\begin{proof}

Mirror descent will converge due to the descent lemma shown in \cref{descent_lemma_mirror}.

To prove the bound, we will evoke the smoothness of the objective $J^{\tilde\pi',k}(\mu) \leq J^{\tilde\pi,k}(\mu) + \langle \nabla J^{\tilde\pi,k}(\mu), \tilde\pi' - \tilde\pi \rangle + \frac{\beta}{2} \|\tilde\pi' - \tilde\pi\|^2$. Next by the definition of the Bregman Divergence, $\mu$-strong convexity of $\Phi$ implies $D_{\Phi}(\tilde\pi',\tilde\pi) \geq \frac{\lambda}{2} \| \tilde\pi' - \tilde\pi \|^2$. Lastly, we will use a consequence of the three-point equality $\langle \nabla J^{\tilde\pi_t,k}(\mu), \tilde\pi_{t + 1} - \tilde\pi^*\rangle \leq\frac{1}{\eta}(D_{\Phi}(\tilde\pi^*,\tilde\pi_t) - D_{\Phi}(\tilde\pi^*, \tilde\pi_{t + 1}) - D_{\Phi}(\tilde\pi_{t + 1}, \tilde\pi_t))$ proved in \cref{three_point}.

We will create a bound on $-\nabla_{\tilde \pi^* - \tilde \pi_t}J^{\tilde\pi_t,k}$ as follows:
\begin{align}
-\nabla_{\tilde \pi^* - \tilde \pi_t}J^{\tilde\pi_t,k} &= \langle \nabla J^{\tilde\pi_t,k}(\mu), \tilde\pi_{t} - \tilde\pi^*\rangle\\
&= \langle \nabla J^{\tilde\pi_t,k}(\mu), \tilde\pi_{t} - \tilde\pi^*\rangle + \langle \nabla J^{\tilde\pi_t,k}(\mu), \tilde\pi_{t + 1} - \tilde\pi_t\rangle - \langle \nabla J^{\tilde\pi_t,k}(\mu), \tilde\pi_{t + 1} - \tilde\pi_t\rangle\\
&= \langle \nabla J^{\tilde\pi_t,k}(\mu), \tilde\pi_{t + 1} - \tilde\pi^*\rangle  - \langle \nabla J^{\tilde\pi_t,k}(\mu), \tilde\pi_{t + 1} - \tilde\pi_t\rangle\\
&\leq \langle \nabla J^{\tilde\pi_t,k}(\mu), \tilde\pi_{t + 1} - \tilde\pi^*\rangle + \frac{\beta}{2}\|\tilde \pi_{t + 1} - \tilde \pi_t\|^2 - (J^{\tilde\pi_{t + 1},k}(\mu) - J^{\tilde\pi_t,k}(\mu))\label{mirror_1}\\
&\leq \langle \nabla J^{\tilde\pi_t,k}(\mu), \tilde\pi_{t + 1} - \tilde\pi^*\rangle + \frac{\beta}{\lambda}D_{\Phi}(\tilde \pi_{t + 1}, \tilde \pi_t) - (J^{\tilde\pi_{t + 1},k}(\mu) - J^{\tilde\pi_t,k}(\mu))\label{mirror_2}\\
&\leq\frac{1}{\eta}(D_{\Phi}(\tilde\pi^*,\tilde\pi_t) - D_{\Phi}(\tilde\pi^*, \tilde\pi_{t + 1}) - D_{\Phi}(\tilde\pi_{t + 1}, \tilde\pi_t)) + \frac{\beta}{\lambda}D_\Phi(\tilde\pi_{t + 1}, \tilde\pi_t) - (J^{\tilde\pi_{t + 1},k}(\mu) - J^{\tilde\pi_t,k}(\mu))\label{mirror_3}\\
&=\frac{1}{\eta}(D_{\Phi}(\tilde\pi^*,\tilde\pi_t) - D_{\Phi}(\tilde\pi^*, \tilde\pi_{t + 1})) +  (\frac{\beta} {\lambda} - \frac{1}{\eta})D_\Phi(\tilde\pi_{t + 1}, \tilde\pi_t) - (J^{\tilde\pi_{t + 1},k}(\mu) - J^{\tilde\pi_t,k}(\mu))\label{mirror_4}\\
&\leq \frac{1}{\eta}(D_{\Phi}(\tilde\pi^*,\tilde\pi_t) - D_{\Phi}(\tilde\pi^*, \tilde\pi_{t + 1})) - (J^{\tilde\pi_{t + 1},k}(\mu) - J^{\tilde\pi_t,k}(\mu))
\end{align}
\cref{mirror_1}, \cref{mirror_2}, \cref{mirror_3} are the applications of the smoothness condition, lower bound on Bregman divergence, and \cref{three_point} respectively. In \cref{mirror_4}, our chosen learning rate $\eta = \frac{\lambda}{\beta}$ is used.

Averaging both sides across the iterations yields:
\begin{align}
\frac{1}{T}\sum_{t = 0}^{T - 1}-\nabla_{\tilde\pi^* - \tilde\pi_t} J^{\tilde\pi_t,k}(\mu)
&\leq \frac{1}{T\eta}\sum_{t = 0}^{T - 1}(D_{\Phi}(\tilde\pi^*,\tilde\pi_t) - D_{\Phi}(\tilde\pi^*, \tilde\pi_{t + 1})) - \frac{1}{T}\sum_{t = 0}^{T - 1}(J^{\tilde\pi_{t + 1},k}(\mu) - J^{\tilde\pi_t,k}(\mu))  \\
&= \frac{1}{T\eta}(D_{\Phi}(\tilde\pi^*,\tilde\pi_0) - D_{\Phi}(\tilde\pi^*, \tilde\pi_{T})) - \frac{1}{T}\sum_{t = 0}^{T - 1}(J^{\tilde\pi_{t + 1},k}(\mu) - J^{\tilde\pi_t,k}(\mu))    \\
&\leq \frac{1}{T}\bigg(\frac{\beta}{\lambda}D_{\Phi}(\tilde\pi^*,\tilde\pi_0)\bigg) - \frac{1}{T}\sum_{t = 0}^{T - 1}(J^{\tilde\pi_{t + 1},k}(\mu) - J^{\tilde\pi_t,k}(\mu))
\end{align}
where the learning rate $\eta = \frac{\lambda}{\beta}$ is use once again.

\end{proof}

This result is then combined with \cref{closed_form} to prove \cref{mirror_bound}.

\textbf{Proof for \cref{mirror_bound}}:
\begin{proof}
\cref{closed_form} is evoked with $\tilde \pi_t$ and $\tilde\pi^*$ which gives
$-\nabla_{\tilde\pi^* - \tilde\pi_{t}} J^{\tilde\pi_{t},k}(\mu) \geq \frac{1}{1 - \gamma^k}   (J^{\tilde\pi_{t},k}(\mu) - J^{\pi^*_{det}}(\mu)) - \frac{6\gamma^k}{(1 - \gamma^k)(1 - \gamma)} g_{max}$

Substituting into  \cref{proj_gd_lemma},
\begin{align}
\frac{1}{T}\sum_{t = 0}^{T - 1}J^{\tilde\pi_t,k}(\mu) -  J^{\pi^*_{det}}(\mu)
&\leq\frac{6\gamma^k}{1 - \gamma} g_{max} + \frac{1}{T}(1 - \gamma^k)\frac{\beta}{\lambda}D_{\Phi}(\tilde\pi^*,\tilde\pi_0)- \frac{1 - \gamma^k}{T}\sum_{t = 0}^{T - 1}(J^{\tilde\pi_{t + 1},k}(\mu) - J^{\tilde\pi_t,k}(\mu))\\
& \leq \frac{6\gamma^k}{1 - \gamma} g_{max}  + \frac{1}{T}\frac{\beta}{\lambda}D_{\Phi}(\tilde\pi^*,\tilde\pi_0)- \frac{1}{T}\sum_{t = 0}^{T - 1}(J^{\tilde\pi_{t + 1},k}(\mu) - J^{\tilde\pi_t,k}(\mu))\label{mirror_20}
\end{align}
Here \cref{mirror_20} uses that $\sum_{t = 0}^{T - 1}(J^{\tilde\pi_{t + 1},k}(\mu) - J^{\tilde\pi_t,k}(\mu))$ is negative by the descent lemma in \cref{descent_lemma_mirror}.

Combining the sums together with the descent lemma shown in \cref{descent_lemma_mirror} gives the last iterate performance bound
\begin{align*}
 J^{\tilde\pi_{T},k}(\mu) - J^{\pi^*_{det}}(\mu) &\leq \frac{1}{T}\sum_{t = 0}^{T - 1}J^{\tilde\pi_{t + 1},k}(\mu) -  J^{\pi^*_{det}}(\mu)\\
& \leq \frac{6\gamma^k}{1 - \gamma} g_{max}  + \frac{1}{T}\bigg(\frac{\beta}{\lambda}D_{\Phi}(\tilde\pi^*,\tilde\pi_0)\bigg)
\end{align*}

Following similar steps to \cref{part1_proof}, results in
\begin{align}
    &\mathbb E_{\pi_{det}\sim\tilde \pi_{T}}[J^{\pi_{det}}(\mu)] - J^{\pi_{det}^*}(\mu)\\
    &\leq J^{\tilde\pi_T,k}(\mu) -  J^{\pi^*_{det}}(\mu)
    + \frac{2\gamma^k}{1 - \gamma} g_{max}\\
& \leq \frac{8\gamma^k}{1 - \gamma} g_{max}  + \frac{1}{T}\bigg(\frac{\beta}{\lambda}D_{\Phi}(\tilde\pi^*,\tilde\pi_0) \bigg).
\end{align}

\end{proof}

\subsection{Intermediate Lemmas}
\begin{lemma}[$k$-step Performance Difference Lemma]
    $$J^{\tilde\pi_{1},k}(\mu) = J^{\tilde\pi_{2},k}(\mu) - \frac{1}{1 - \gamma^k}\mathbb E_{s\sim d_{\mu}^{\tilde\pi_{2}, k}}[A^{\tilde\pi_{1},k}(s, \tilde\pi_{2})]$$
    where $A^{\tilde \pi_1,k}(s, \tilde \pi_2) = Q^{\tilde\pi_1,k}(s, \tilde \pi_2) - J^{\tilde\pi_1,k}(s)$ is the $k$-step advantage function.
\end{lemma}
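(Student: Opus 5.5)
The plan is to adapt the classical telescoping proof of the performance difference lemma, using blocks of $k$ timesteps in place of single timesteps. For a correlated second argument I will read $Q^{\tilde\pi_1,k}(s,\tilde\pi_2)$ as $\mathbb E_{\pi_{det}\sim\tilde\pi_2}[Q^{\tilde\pi_1,k}(s,\pi_{det})]$, which is the affine extension already used in the proof of \cref{closed_form}. Unfolding the definition of the $k$-step $Q$-function, with the first block driven by $\pi_{det}$ and every later block by $\tilde\pi_1$, gives
\[
Q^{\tilde\pi_1,k}(s,\pi_{det}) = \mathbb E_{\tau^k\sim \pi_{det}\lvert_s}\Big[\textstyle\sum_{t=0}^{k-1}\gamma^t g(s(t),a(t)) + \gamma^k J^{\tilde\pi_1,k}(s(k))\Big].
\]
This identity is the only structural fact about $Q$ the argument needs.

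Fix $s_0$ and consider the trajectory $\tau\sim\tilde\pi_2\lvert_{s_0,k}$. Write $B_m=\sum_{t=mk}^{(m+1)k-1}\gamma^{t-mk}g(s(t),a(t))$ for the discounted cost of block $m$. Then $J^{\tilde\pi_2,k}(s_0)=\mathbb E_\tau[\sum_{m\ge0}\gamma^{mk}B_m]$. I add and subtract the telescoping sum $\sum_{m\ge0}\big(\gamma^{(m+1)k}J^{\tilde\pi_1,k}(s_{(m+1)k})-\gamma^{mk}J^{\tilde\pi_1,k}(s_{mk})\big)$, whose value is $-J^{\tilde\pi_1,k}(s_0)$. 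The tail $\gamma^{Mk}J^{\tilde\pi_1,k}(s_{Mk})$ vanishes as $M\to\infty$ because $|J^{\tilde\pi_1,k}|\le g_{max}/(1-\gamma)$. This gives
\[
J^{\tilde\pi_2,k}(s_0)-J^{\tilde\pi_1,k}(s_0)=\mathbb E_\tau\Big[\textstyle\sum_{m\ge0}\gamma^{mk}\big(B_m+\gamma^kJ^{\tilde\pi_1,k}(s_{(m+1)k})-J^{\tilde\pi_1,k}(s_{mk})\big)\Big].
\]
All sums are absolutely convergent, so Fubini lets me exchange the expectation and the sum over $m$.

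The key step, and the one I expect to need the most care, is conditioning each summand on $s_{mk}=s$. In the $k$-step rollout of $\tilde\pi_2$, a fresh $\pi_{det}\sim\tilde\pi_2$ is drawn at time $mk$, independently of the past, and then executed for $k$ steps under the Markov transitions $P$. So, given $s_{mk}=s$, the block $(s_{mk},\dots,s_{(m+1)k})$ has the same law as the first block of a rollout started at $s$, and the displayed identity applies. The conditional expectation of the summand is therefore $\mathbb E_{\pi_{det}\sim\tilde\pi_2}[Q^{\tilde\pi_1,k}(s,\pi_{det})]-J^{\tilde\pi_1,k}(s)=A^{\tilde\pi_1,k}(s,\tilde\pi_2)$. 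This block-boundary Markov property is exactly why the occupancy measure only samples the times $mk$, and I would state it explicitly.

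It then remains to collect terms:
\[
J^{\tilde\pi_2,k}(s_0)-J^{\tilde\pi_1,k}(s_0)=\textstyle\sum_{m}\gamma^{mk}\sum_s P(s_{mk}=s\lvert s_0,\tilde\pi_2,k)A^{\tilde\pi_1,k}(s,\tilde\pi_2)=\frac{1}{1-\gamma^k}\mathbb E_{s\sim d_{s_0}^{\tilde\pi_2,k}}[A^{\tilde\pi_1,k}(s,\tilde\pi_2)],
\]
using the definition of $d^{\tilde\pi_2,k}_{s_0}$. Averaging over $s_0\sim\mu$ and rearranging gives the stated lemma.
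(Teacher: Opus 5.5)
Your proposal is correct and is essentially the paper's argument. The paper writes $J^{\tilde\pi_1,k}(s_0)=Q^{\tilde\pi_1,k}(s_0,\tilde\pi_2)-A^{\tilde\pi_1,k}(s_0,\tilde\pi_2)$, expands $Q$ over one $k$-step block driven by $\tilde\pi_2$, and substitutes recursively, which is exactly your block-wise telescoping sum. Your version also spells out the two points the paper leaves behind its ``$\ldots$'': the vanishing tail $\gamma^{Mk}J^{\tilde\pi_1,k}(s_{Mk})\to 0$, and the fresh-sample Markov property at block boundaries that justifies conditioning on $s_{mk}$.
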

\begin{proof}
\label[lemma]{pd_lemma}
Following similar steps to the traditional performance difference lemma proof,
\begin{align}
    &J^{\tilde\pi_{1},k}(s_0) = Q^{\tilde\pi_{1},k}(s_0, \tilde\pi_{2}) - Q^{\tilde\pi_{1},k}(s_0, \tilde\pi_{2}) + J^{\tilde\pi_{1},k}(s_0)\\
    &= Q^{\tilde\pi_{1},k}(s_0, \tilde\pi_{2}) - A^{\tilde\pi_{1},k}(s_0, \tilde\pi_{2})\\
    &= \mathbb E_{\tau^k \sim\tilde\pi_{2}\lvert_{s_0}}[\sum_{t = 0}^{k - 1} \gamma^t g(s_t,a_t) + \gamma^k J^{\tilde\pi_{1},k}(s_{k})] - A^{\tilde\pi_{1},k}(s, \tilde\pi_{2})\\
    &= \ldots\\
    &= J^{\tilde\pi_{2},k}(s_0) - \sum_{m = 0}^\infty \gamma^{mk} \sum_{s \in \mathcal S}P(s_{mk} = s\lvert s_0, \tilde\pi_{2},k)A^{\tilde\pi_{1},k}(s_{mk}, \tilde\pi_{2})\label{pd_1}\\
    &= J^{\tilde\pi_{2},k}(s_0) - \frac{1}{1 - \gamma^k}\mathbb E_{s\sim d_{s_0}^{\tilde\pi_{2}, k}}[A^{\tilde\pi_{1},k}(s, \tilde\pi_{2})]
\end{align}
In \cref{pd_1}, we recursively substitute our expression for $J^{\tilde \pi_1,k}(s_0)$.

Taking expectations on both sides,
$J^{\tilde\pi_{1},k}(\mu) = J^{\tilde\pi_{2},k}(\mu) - \frac{1}{1 - \gamma^k}\mathbb E_{s\sim d_{\mu}^{\tilde\pi_{2}, k}}[A^{\tilde\pi_{1},k}(s, \tilde\pi_{2})]$
\end{proof}

\begin{lemma}
\label[lemma]{dist_close}
    $\sum_{s\in \mathcal S}\lvert \mu(s) - d_\mu^{\tilde\pi,k}(s) \rvert \leq 2\gamma^k$
\end{lemma}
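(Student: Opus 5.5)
The plan is to expand the definition of the $k$-step occupancy measure and isolate the $m=0$ term, which is exactly $\mu$. Write $\nu_m(s) = P(s_{mk} = s \lvert \mu, \tilde\pi, k)$ for the state distribution at time $mk$ under the $k$-step rollout. Note that $\nu_0 = \mu$ and each $\nu_m$ is a probability distribution on $\mathcal S$. Then
\begin{align*}
d_\mu^{\tilde\pi,k}(s) = (1-\gamma^k)\mu(s) + (1-\gamma^k)\sum_{m=1}^\infty \gamma^{mk}\nu_m(s).
\end{align*}

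Subtracting $\mu(s)$ gives
\begin{align*}
d_\mu^{\tilde\pi,k}(s) - \mu(s) = -\gamma^k \mu(s) + (1-\gamma^k)\sum_{m=1}^\infty \gamma^{mk}\nu_m(s).
\end{align*}
Next I take absolute values, apply the triangle inequality, and sum over $s$. The first term contributes $\gamma^k \sum_s \mu(s) = \gamma^k$. The second term is nonnegative, so its total mass is
\begin{align*}
(1-\gamma^k)\sum_{m\ge 1}\gamma^{mk} = (1-\gamma^k)\frac{\gamma^k}{1-\gamma^k} = \gamma^k,
\end{align*}
where exchanging the sums is justified by Tonelli since all terms are nonnegative. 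Adding the two contributions gives $\sum_s \lvert \mu(s) - d_\mu^{\tilde\pi,k}(s)\rvert \le 2\gamma^k$.

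No step here is a real obstacle. The only care needed is to confirm that the normalization $(1-\gamma^k)$ together with the geometric weights $\gamma^{mk}$ gives total mass $1$, so that the tail $m\ge1$ has mass exactly $\gamma^k$. The bound needs nothing about the dynamics or the policy beyond each $\nu_m$ being a probability distribution.
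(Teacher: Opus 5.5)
Your proof is correct and matches the paper's: both split off the $m=0$ term of $d_\mu^{\tilde\pi,k}$, so the difference becomes $\gamma^k$ times a difference of two nonnegative unit-mass measures. The only variation is the last step. The paper factors out $\gamma^k$ and cites the fact that the total variation distance between two distributions is at most $1$, while you apply the triangle inequality to the two pieces directly (each of mass $\gamma^k$), which is the same fact written out.
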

\begin{proof}
The following bounds the difference between $\mu$ and $d_\mu^{\tilde\pi,k}$:
\begin{align}
    \sum_{s\in \mathcal S} \lvert \mu(s) - d_\mu^{\tilde\pi,k}(s) \rvert &= \sum_{s\in \mathcal S} \bigg\lvert \mu(s) - (1 - \gamma^k) \sum_{m = 0}^\infty \gamma^{mk} P(s_k = s \lvert \mu, \tilde\pi,k)\bigg\rvert\\
    & = \sum_{s\in \mathcal S} \bigg\lvert \mu(s) - (1 - \gamma^k) \mu(s) - (1 - \gamma^k)\sum_{m = 1}^\infty \gamma^{mk} P(s_k = s \lvert \mu, \tilde\pi,k)\bigg\rvert\\
    & = \sum_{s\in \mathcal S} \bigg\lvert \gamma^k \mu(s) - \gamma^k(1 - \gamma^k)\sum_{m = 1}^\infty \gamma^{(m - 1)k} P(s_k = s \lvert \mu, \tilde\pi,k)\bigg\rvert\\
    & = (2\gamma^k)\frac{1}{2}\sum_{s\in \mathcal S} \bigg\lvert  \mu(s) - (1 - \gamma^k)\sum_{m' = 0}^\infty \gamma^{m'k} P(s_k = s \lvert \mu, \tilde\pi,k)\bigg\rvert\\
    &\leq 2\gamma^k.
\end{align}
The last step uses that the total variation distance of any two distributions is bounded by $1$.
\end{proof}

\begin{lemma}
Mirror descent under the same assumptions of \cref{mirror_bound} will satisfy
    $$\langle \nabla J^{\tilde\pi_t,k}(\mu), \tilde\pi_{t + 1} - \tilde\pi\rangle \leq \frac{1}{\eta}(D_{\Phi}(\tilde\pi,\tilde\pi_t) - D_{\Phi}(\tilde\pi, \tilde\pi_{t + 1}) - D_{\Phi}(\tilde\pi_{t + 1}, \tilde\pi_t))$$
    \label[lemma]{three_point}
    for any $\tilde \pi\in \tilde \Pi_{res}$.
\end{lemma}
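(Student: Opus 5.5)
The plan is to prove the three-point inequality of \cref{three_point} from the first-order optimality condition of the mirror descent subproblem, combined with the standard three-point identity for Bregman divergences.

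\textbf{Optimality condition.} Recall that $\tilde\pi_{t+1}$ minimizes
\[
F(\tilde\pi') = \langle \nabla J^{\tilde\pi_t,k}(\mu), \tilde\pi'\rangle + \frac{1}{\eta}D_\Phi(\tilde\pi',\tilde\pi_t)
\]
over the convex set $\tilde\Pi_{res}=\Delta(\Pi_{det})$. This set is convex by the assumption in \cref{assumption}. The function $F$ is convex and differentiable, since $\Phi$ is strongly convex and differentiable and $D_\Phi(\cdot,\tilde\pi_t)$ equals $\Phi$ minus an affine function. Its gradient is $\nabla F(\tilde\pi') = \nabla J^{\tilde\pi_t,k}(\mu) + \frac{1}{\eta}(\nabla\Phi(\tilde\pi') - \nabla\Phi(\tilde\pi_t))$. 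The variational inequality for a constrained minimizer therefore gives, for every $\tilde\pi\in\tilde\Pi_{res}$,
\[
\Big\langle \nabla J^{\tilde\pi_t,k}(\mu) + \tfrac{1}{\eta}\big(\nabla\Phi(\tilde\pi_{t+1}) - \nabla\Phi(\tilde\pi_t)\big),\ \tilde\pi - \tilde\pi_{t+1}\Big\rangle \ge 0.
\]
Rearranging yields
\[
\langle \nabla J^{\tilde\pi_t,k}(\mu), \tilde\pi_{t+1}-\tilde\pi\rangle \le \tfrac{1}{\eta}\langle \nabla\Phi(\tilde\pi_{t+1})-\nabla\Phi(\tilde\pi_t), \tilde\pi - \tilde\pi_{t+1}\rangle.
\]

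\textbf{Three-point identity.} Next I verify, by expanding $D_\Phi(x,y)=\Phi(x)-\Phi(y)-\langle\nabla\Phi(y),x-y\rangle$, that for any $x,y,z$
\[
\langle \nabla\Phi(y)-\nabla\Phi(z), x-y\rangle = D_\Phi(x,z)-D_\Phi(x,y)-D_\Phi(y,z).
\]
The $\Phi(x)$, $\Phi(y)$ and $\Phi(z)$ terms cancel, leaving exactly the inner products. Applying this with $x=\tilde\pi$, $y=\tilde\pi_{t+1}$ and $z=\tilde\pi_t$ turns the right-hand side of the previous display into
\[
\tfrac{1}{\eta}\big(D_\Phi(\tilde\pi,\tilde\pi_t)-D_\Phi(\tilde\pi,\tilde\pi_{t+1})-D_\Phi(\tilde\pi_{t+1},\tilde\pi_t)\big),
\]
which is the claimed inequality.

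\textbf{Main obstacle.} The delicate step is justifying the first-order condition. It requires the minimizer $\tilde\pi_{t+1}$ to exist and $\nabla\Phi$ to be defined there, possibly on the boundary of the simplex.

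Existence follows because $\tilde\Pi_{res}$ is compact when $\Pi_{det}$ is finite and $F$ is continuous and strictly convex, so the minimizer exists and is unique. The hypothesis that $\Phi$ is differentiable on the constraint set then makes the gradient available at $\tilde\pi_{t+1}$.

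If $\Phi$ were only differentiable on the relative interior, as with negative entropy, I would instead argue that the minimizer lies in the interior. Alternatively, I would use the one-sided directional derivative of $F$ along $\tilde\pi-\tilde\pi_{t+1}$, which is nonnegative by optimality and convexity of the set. The same three-point algebra then goes through.
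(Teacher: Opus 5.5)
Your proof is correct and follows the same route as the paper. You use the first-order (variational) optimality condition for the mirror-descent subproblem, rearranged and combined with the three-point identity for Bregman divergences at $x=\tilde\pi$, $y=\tilde\pi_{t+1}$, $z=\tilde\pi_t$; your added remarks on existence of the minimizer and on differentiability of $\Phi$ at $\tilde\pi_{t+1}$ just make explicit what the paper takes for granted.
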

\begin{proof}
Let $F_t(\tilde\pi) = \eta \langle \nabla J^{\tilde\pi_t,k}(\mu), \tilde\pi\rangle + D_{\Phi}(\tilde\pi, \tilde\pi_t)$. Note that $\tilde\pi_{t + 1}$ is a minimizer of this function by definition.

Further, $F_t$ is convex in $\tilde \pi$ and therefore $\tilde\pi_{t + 1}$ is globally optimal for $F_t$. Therefore,
$\langle \nabla F_t(\tilde\pi_{t + 1}), \tilde\pi - \tilde\pi_{t + 1}\rangle \geq 0$ or equivalently $\langle \nabla F_t(\tilde\pi_{t + 1}), \tilde\pi_{t + 1} - \tilde\pi\rangle \leq 0$.

$\nabla F_t(\tilde \pi)$ can be simplified using the definition of the Bregman Divergence $D_\Phi(x,y) = \Phi(x) - \Phi(y) - \nabla \Phi(y)^T(x - y)$ as follows: $\nabla F_t(\tilde\pi) = \eta \nabla J^{\tilde\pi_t,k}(\mu) +  \nabla D_{\Phi}(\tilde\pi, \tilde\pi_t) =  \eta \nabla J^{\tilde\pi_t,k}(\mu) + \nabla \Phi(\tilde\pi) - \nabla \Phi(\tilde\pi_t)$.

Plugging this expression into $\langle \nabla F_t(\tilde\pi_{t + 1}), \tilde\pi_{t + 1} - \tilde\pi\rangle \leq 0$, we get

$\eta \langle \nabla J^{\tilde\pi_t,k}(\mu), \tilde\pi_{t + 1} - \tilde\pi\rangle \leq \langle \nabla \Phi(\tilde\pi_t) - \nabla\Phi(\tilde\pi_{t + 1}), \tilde\pi_{t + 1} - \tilde\pi\rangle = D_{\Phi}(\tilde\pi,\tilde\pi_t) - D_{\Phi}(\tilde\pi, \tilde\pi_{t + 1}) - D_{\Phi}(\tilde\pi_{t + 1}, \tilde\pi_t)$
using the three point property of Bregman Divergences $(\nabla \Phi(x) - \nabla \Phi(y))^T(x - z) = D_\Phi(x,y) + D_\Phi(z,x) - D_\Phi(z,y)$
as desired.
\end{proof}
\begin{lemma}
Mirror descent under the same assumptions of \cref{mirror_bound} will satisfy
$$J^{\tilde\pi_{t + 1},k}(\mu) - J^{\tilde\pi_t,k}(\mu) \leq -\frac{\lambda}{2\eta}\|\tilde\pi_{t + 1} - \tilde\pi_t\|^2 $$
    \label[lemma]{descent_lemma_mirror}
\end{lemma}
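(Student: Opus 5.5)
The plan is the standard sufficient-decrease argument for mirror descent on a $\beta$-smooth objective. It combines the smoothness upper bound with the variational (first-order optimality) characterization of the mirror step, evaluated at the previous iterate. Throughout, write $\nabla_t := \nabla J^{\tilde\pi_t,k}(\mu)$.

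First, I will apply \cref{three_point} with the comparison point $\tilde\pi = \tilde\pi_t$, which is admissible since $\tilde\pi_t\in\tilde\Pi_{res}$. Because $D_\Phi(\tilde\pi_t,\tilde\pi_t)=0$, this gives
\[
\langle \nabla_t, \tilde\pi_{t+1}-\tilde\pi_t\rangle \le \frac{1}{\eta}\bigl(-D_\Phi(\tilde\pi_t,\tilde\pi_{t+1}) - D_\Phi(\tilde\pi_{t+1},\tilde\pi_t)\bigr).
\]
Next, by $\lambda$-strong convexity of $\Phi$, each of the two Bregman terms is at least $\frac{\lambda}{2}\|\tilde\pi_{t+1}-\tilde\pi_t\|^2$. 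Together they yield
\[
\langle \nabla_t, \tilde\pi_{t+1}-\tilde\pi_t\rangle \le -\frac{\lambda}{\eta}\|\tilde\pi_{t+1}-\tilde\pi_t\|^2 .
\]
Alternatively, one could drop the $D_\Phi(\tilde\pi_t,\tilde\pi_{t+1})$ term and keep only $-\frac{\lambda}{2\eta}\|\cdot\|^2$; which version to use depends on how much slack the next step needs.

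Then I will invoke $\beta$-smoothness with $\tilde\pi'=\tilde\pi_{t+1}$ and $\tilde\pi=\tilde\pi_t$:
\[
J^{\tilde\pi_{t+1},k}(\mu)-J^{\tilde\pi_t,k}(\mu) \le \langle\nabla_t,\tilde\pi_{t+1}-\tilde\pi_t\rangle + \frac{\beta}{2}\|\tilde\pi_{t+1}-\tilde\pi_t\|^2 \le \Bigl(-\frac{\lambda}{\eta}+\frac{\beta}{2}\Bigr)\|\tilde\pi_{t+1}-\tilde\pi_t\|^2.
\]
Substituting $\eta=\lambda/\beta$, so that $\beta = \lambda/\eta$, the coefficient becomes $-\frac{\lambda}{\eta}+\frac{\lambda}{2\eta} = -\frac{\lambda}{2\eta}$, which is exactly the claimed bound.

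The main point of care is the bookkeeping of constants, and this is why I keep both Bregman terms. If only one Bregman term were kept, the coefficient would be $-\frac{\lambda}{2\eta}+\frac{\beta}{2}=0$, which proves only monotone non-increase and not the stated strict decrease. The symmetric use of both divergences from the three-point inequality is what supplies the extra $\frac{\lambda}{2\eta}$. I will also note that \cref{three_point} requires the mirror step's minimizer to exist in $\tilde\Pi_{res}$. It does, since $\tilde\Pi_{res}=\Delta(\Pi_{det})$ is compact and convex and $F_t$ is continuous and convex.

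Finally, the sufficient decrease gives convergence. Summing over $t$ and using boundedness of $J^{\cdot,k}(\mu)$ by $g_{max}/(1-\gamma)$ shows $\sum_t\|\tilde\pi_{t+1}-\tilde\pi_t\|^2<\infty$, and $J^{\tilde\pi_t,k}(\mu)$ is monotone and bounded, so it converges.
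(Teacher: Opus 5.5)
Your proposal is correct and follows the paper's proof essentially step for step. Like the paper, you combine $\beta$-smoothness with \cref{three_point} at comparison point $\tilde\pi_t$ to retain both Bregman terms, then use $\lambda$-strong convexity to bound their sum below by $\lambda\|\tilde\pi_{t+1}-\tilde\pi_t\|^2$, and finally substitute $\eta=\lambda/\beta$; your observation that keeping both divergences is what turns the coefficient from $0$ into $-\frac{\lambda}{2\eta}$ is exactly the point on which the paper's argument relies.
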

\begin{proof}

By the $\beta$-smoothness of $J^{\tilde \pi,k}(\mu)$, we have
\begin{align}
    &J^{\tilde\pi_{t + 1},k}(\mu) \leq J^{\tilde\pi_t,k}(\mu) + \langle \nabla J^{\tilde\pi_t,k}(\mu), \tilde\pi_{t + 1} - \tilde\pi_t\rangle + \frac{\beta}{2}\|\tilde\pi_{t + 1} - \tilde\pi_t\|^2\\
&\leq J^{\tilde\pi_t,k}(\mu) - \frac{1}{\eta}(D_\Phi(\tilde\pi_{t + 1}, \tilde\pi_t) + D_\Phi(\tilde\pi_{t}, \tilde\pi_{t + 1}) )  + \frac{\beta}{2}\|\tilde\pi_{t + 1} - \tilde\pi_t\|^2\label{mirror_1_0}\\
&\leq J^{\tilde\pi_t,k}(\mu) - \frac{\lambda}{\eta}\|\tilde\pi_{t + 1} - \tilde\pi_t\|^2 + \frac{\beta}{2}\|\tilde\pi_{t + 1} - \tilde\pi_t\|^2\label{mirror_1_1}\\
&= J^{\tilde\pi_t,k}(\mu) - (\frac{\lambda}{\eta} - \frac{\beta}{2})\|\tilde\pi_{t + 1} - \tilde\pi_t\|^2 \\
&= J^{\tilde\pi_t,k}(\mu) - \frac{\lambda}{2\eta}\|\tilde\pi_{t + 1} - \tilde\pi_t\|^2  \label{mirror_1_2}.
\end{align}
In \cref{mirror_1_0}, we evoke \cref{three_point} with $\tilde \pi_t$ as $\tilde \pi$.
In \cref{mirror_1_1}, we evoke the $\lambda$-strong convexity of $\Phi$, the Bregman Divergence satisfies $D_{\Phi}(\tilde\pi',\tilde\pi) \geq \frac{\lambda}{2} \| \tilde\pi' - \tilde\pi \|^2$. In \cref{mirror_1_2}, we use $\eta = \frac{\lambda}{\beta}$.
\end{proof}

\section{Potential Pitfall - Alternative $k$-step Definition:}
The following is a tempting but unsuccessful alternative $k$-step construction that at first glance may appear promising. Suppose we return to the original policy definition $\pi: \mathcal S \rightarrow \Delta(\mathcal A)$ and simply maintain $k$ independent policies $\{\pi^j: \pi^j\in \Pi_{res}, 0 \leq j < k\}$ that would cycled through and executed one after the other independently.

This seems as though it may allow extra ``directions of movement'' that would allow a descent on these chain of policies to escape poor suboptimal local minima. Unfortunately, it can be verified by simulation that the suboptimal local minima in the two state example from \cref{running_example} is not removed with this scheme even for large k.

Intuitively, the reason is because the argument from \cref{origin} still causes myopicness in this alternative scheme. The two state example under this scheme will have policies of the form $\pi_{\theta_1}, \pi_{\theta_1},\ldots, \pi_{\theta_k}$ that can have different $\theta$-parameters at each step. But the policy with $\theta_i = 0$ for $0 \leq i < k$ will remain a critical point because when perturbed by $d\theta_i$ for parameter $\theta_i$, the probability of sampling trajectories taking action R more than one time will include products of these $d\theta_i$. Similar to \cref{origin}, these will be treated as higher order terms.

\newpage

\end{document}